\title{Complexity and Algorithms for Exploiting Quantal Opponents\\ in Large Two-Player Games}
\author{
    David Milec\textsuperscript{\rm 1}, Jakub \v{C}ern\'{y}\textsuperscript{\rm 2}, Viliam Lis\'{y}\textsuperscript{\rm 1}, Bo An\textsuperscript{\rm 2}
}
\DeclareMathOperator*{\argmax}{arg\,max}
\DeclareMathOperator*{\softmax}{soft\,max}
\newcommand{\tikzmark}[1]{\tikz[overlay,remember picture] \node (#1) {};}
\newcommand{\DrawBox}[3][]{%
    \tikz[overlay,remember picture]{
    \draw[black,#1]
      ($(#2)+(-0.25em,2.0ex)$) rectangle
      ($(#3)+(0.95em,-0.75ex)$);}
}
\definecolor{rqrcolor}{RGB}{246,82,43}
\definecolor{ntublue}{rgb}{0.094, 0.1098, 0.384}
\begin{document}
\makeatletter
\xpatchcmd{\paragraph}{3.25ex \@plus1ex \@minus.2ex}{3pt plus 1pt minus 1pt}{\typeout{success!}}{\typeout{failure!}}
\makeatother

\newcommand{\Iset}{\mathcal{I}}
\newcommand{\pr}{\vartriangle}
\newcommand{\ps}{\triangledown}
\def\real{\mathbb{R}}

\newtheorem{example}{Example}
\newtheorem{theorem}{Theorem}
\newtheorem{definition}{Definition}
\newtheorem{observation}[theorem]{Observation}
\newtheorem{conjecture}[theorem]{Conjecture}
\newtheorem{corollary}[theorem]{Corollary}
\newtheorem{lemma}{Lemma}
\newtheorem{proposition}[theorem]{Proposition}

\newenvironment{sproof}{%
  \renewcommand{\proofname}{Proof (Sketch)}\proof}{\endproof}

\maketitle

\begin{abstract}
Solution concepts of traditional game theory assume entirely rational players; therefore, their ability to exploit subrational opponents is limited. One type of subrationality that describes human behavior well is the quantal response. While there exist algorithms for computing solutions against quantal opponents, they either do not scale or may provide strategies that are even worse than the entirely-rational Nash strategies. This paper aims to analyze and propose scalable algorithms for computing effective and robust strategies against a quantal opponent in normal-form and extensive-form games. Our contributions are: 
(1) we define two different solution concepts related to exploiting quantal opponents and analyze their properties; 
(2) we prove that computing these solutions is computationally hard;
(3) therefore, we evaluate several heuristic approximations based on scalable counterfactual regret minimization (CFR);
and (4) we identify a CFR variant that exploits the bounded opponents better than the previously used variants while being less exploitable by the worst-case perfectly-rational opponent.
\end{abstract}

\section{Introduction}
Extensive-form games are a powerful model able to describe recreational games, such as poker, as well as real-world situations from physical or network security.
Recent advances in solving these games, and particularly the Counterfactual Regret Minimization (CFR) framework \cite{zinkevich2008regret}, allowed creating superhuman agents even in huge games, such as no-limit Texas hold'em with approximately $10^{160}$ different decision points \cite{moravvcik2017deepstack,brown2018superhuman}.
The algorithms generally approximate a Nash equilibrium, which assumes that all players are perfectly rational, and is known to be inefficient in exploiting weaker opponents. 
An algorithm that would be able to take an opponent's imperfection into account is expected to win by a much larger margin \cite{johanson2009data,bard2013online}.

The most common model of bounded rationality in humans is the quantal response (QR) model \cite{mckelvey1995quantal,mckelvey1998quantal}. Multiple experiments identified it as a \textit{good predictor of human behavior} in games~\cite{yang2012computing,haile2008empirical}. 
QR is also the \textit{hearth of the algorithms successfully deployed in the real world}~\cite{yang2012computing, fang2017paws}. 
It suggests that players respond stochastically, picking better actions with higher probability.
Therefore, we investigate \textbf{how to scalably compute a good strategy against a quantal response opponent in two-player normal-form and extensive-form games.}

If both players choose their actions based on the QR model, their behavior is described by quantal response equilibrium (QRE). Finding QRE is a computationally tractable problem  \cite{mckelvey1995quantal,turocy2005dynamic}, which can be also solved using the CFR framework \cite{farina2019online}. However, when creating AI agents competing with humans, we want to assume that \textbf{one of the players is perfectly rational, and only the opponent's rationality is bounded}.
A tempting approach may be using the algorithms for computing QRE and increasing one player's rationality or using generic algorithms for exploiting opponents \cite{davis2014using} even though the QR model does not satisfy their assumptions, as in \cite{basak2018initial}.
However, this approach generally leads to a solution concept we call Quantal Nash Equilibrium (QNE), which we show is very inefficient in exploiting QR opponents and may even perform worse than an arbitrary Nash equilibrium.

Since the very nature of the quantal response model assumes that the sub-rational agent responds to a strategy played by its opponent, a more natural setting for studying the optimal strategies against QR opponents are Stackelberg games, in which one player commits to a strategy that is then learned and responded to by the opponent.
Optimal commitments against quantal response opponents - Quantal Stackelberg Equilibrium (QSE) - have been studied in security games \cite{yang2012computing}, and the results were recently extended to normal-form games \cite{cerny2020}. Even in these one-shot games, polynomial algorithms are available only for their very limited subclasses. In extensive-form games, we show that computing the QSE is NP-hard, even in zero-sum games. Therefore, it is very unlikely that the CFR framework could be adapted to closely approximate these strategies. Since we aim for high scalability, we focus on empirical evaluation of several heuristics, including using QNE as an approximation of QSE. We identify a method that is not only more exploitative than QNE, but also more robust when the opponent is rational.

Our contributions are: \textbf{1)} We analyze the relationship and properties of two solution concepts with quantal opponents that naturally arise from Nash equilibrium (QNE) and Stackelberg equilibrium (QSE).
\textbf{2)}  We prove that computing QNE is PPAD-hard even in NFGs, and computing QSE in EFGs is NP-hard.
Therefore, \textbf{3)} we investigate the performance of CFR-based heuristics against QR opponents. The extensive empirical evaluation on four different classes of games with up to $10^8$ histories identifies a variant of CFR-$f$ \cite{davis2014using} that computes strategies better than both QNE and NE.

\section{Background}
Even though our main focus is on extensive-form games, we study the concepts in normal-form games, which can be seen as their conceptually simpler special case.
After defining the models, we proceed to define quantal response and the metrics for evaluating a deployed strategy's quality.

\subsection{Two-player normal-form games}
A two-player normal-form game (NFG) is a tuple $ G = (N,A,u)$ where $N = \{\pr, \ps \}$ is set of players. We use $i$ and $-i$ for one player and her opponent. $A = \{A_\pr,A_\ps\}$ denotes the set of ordered sets of \textit{actions} for both players. The \textit{utility function} $u_i:A_\pr \times A_\ps \to \mathbb{R}$ assigns a value for each pair of actions. A game is called zero-sum if $u_\pr=-u_\ps$.

\textit{Mixed strategy} $\sigma_i \in \Sigma_i$ is a probability distribution over $A_i$. For any \textit{strategy profile} $\sigma \in \Sigma = \{\Sigma_\pr \times \Sigma_\ps\}$ we use $u_i(\sigma) = u_i(\sigma_i, \sigma_{-i})$ as the expected outcome for player $i$, given the players follow strategy profile $\sigma$. A \textit{best response} (BR) of player $i$ to the opponent's strategy $\sigma_{-i}$ is a strategy $\sigma_i^{BR} \in BR_i(\sigma_{-i})$, where $u_i(\sigma_i^{BR}, \sigma_{-i}) \geq u_i(\sigma'_i, \sigma_{-i})$ for all $\sigma'_i \in \Sigma_i$. An $\epsilon$-\textit{best response} is $\sigma_i^{\epsilon BR} \in \epsilon BR_i(\sigma_{-i}), \epsilon > 0$, where $u_i(\sigma_i^{\epsilon BR}, \sigma_{-i}) + \epsilon \geq u_i(\sigma'_i, \sigma_{-i})$ for all $\sigma'_i \in \Sigma_i$. Given a normal-form game $G = (N,A,u)$, a tuple of mixed strategies $( \sigma_i^{NE}, \sigma_{-i}^{NE})$, $\sigma_{i}^{NE}\in\Sigma_{i}, \sigma_{-i}^{NE}\in\Sigma_{-i}$ is a \emph{Nash Equilibrium} if $\sigma_i^{NE}$ is an optimal strategy of player $i$ against strategy $\sigma_{-i}^{NE}$. Formally: 
$\sigma_i^{NE} \in BR(\sigma_{-i}^{NE})\quad\forall i\in \{\pr,\ps \}$

In many situations, the roles of the players are asymmetric. One player (leader - $\pr$) has the power to commit to a strategy, and the other player (follower - $\ps$) plays the best response. This model has many real-world applications~\cite{Tambe}; for example, the leader can correspond to a defense agency committing to a protocol to protect critical facilities. The common assumption in the literature is that the follower breaks ties in favor of the leader. Then, the concept is called a Strong Stackelberg Equilibrium (SSE). 

A leader's strategy $\sigma^{SSE}\in\Sigma_\pr$ is a \emph{Strong Stackelberg Equilibrium} if $\sigma_\pr$ is an optimal strategy of the leader given that the follower best-responds. Formally: 
$
\sigma_\pr^{SSE} = \argmax_{\sigma_\pr' \in \Sigma_\pr} u_\pr(\sigma_\pr',BR_\ps(\sigma_\pr')).
$
In zero-sum games, SSE is equivalent to NE~\cite{conitzer2006computing} and the expected utility is denoted \textit{value of the game}.

\subsection{Two-player extensive-form games}
A two-player extensive-form game (EFG) consist of a set of players $N = \{\pr,\ps,c\}$, where $c$ denotes the chance. $A$ is a finite set of all actions available in the game. $H \subset \{a_1 a_2 \cdots a_n \mid a_j \in A, n \in \mathbb{N}\}$ is the set of histories in the game. We assume that $H$ forms a non-empty finite prefix tree. We use $g \sqsubset h$ to denote that $h$ extends $g$. The \textit{root} of $H$ is the empty sequence $\emptyset$. The set of leaves of $H$ is denoted $Z$ and its elements $z$ are called \textit{terminal histories}. The histories not in Z are \textit{non-terminal histories}. By $A(h) = \{a \in A \mid ha \in H\}$ we denote the set of actions available at $h$. $P : H \setminus Z \to N$ is the \textit{player function} which returns who acts in a given history. Denoting $H_i = \{h \in H\setminus Z \mid P(h) = i\}$, we partition the histories as $H = H_\pr \cup H_\ps \cup H_c \cup Z$. $\sigma_c$ is the \textit{chance strategy} defined on $H_c$. For each $h \in H_c, \sigma_c(h)$ is a probability distribution over $A(h)$. Utility functions assign each player utility for each leaf node, $u_i : Z \to \mathbb{R}$.

The game is of \textit{imperfect information} if some actions or chance events are not fully observed by all players. The information structure is described by \textit{information sets} for each player $i$, which form a partition $\Iset_i$ of $H_i$. For any information set $I_i \in \Iset_i$, any two histories $h, h' \in I_i$ are indistinguishable to player $i$. Therefore $A(h) = A(h')$ whenever $h, h' \in I_i$. For $I_i \in \Iset_i$ we denote by $A(I_i)$ the set $A(h)$ and by $P(I_i)$ the player $P(h)$ for any $h \in I_i$.

A \textit{strategy} $\sigma_i \in \Sigma_i$ of player $i$ is a function that assigns a distribution over $A(I_i)$ to each $I_i \in \Iset_i$. A \textit{strategy profile} $\sigma = (\sigma_\pr, \sigma_\ps)$ consists of strategies for both players. $\pi^\sigma(h)$ is the probability of reaching $h$ if all players play according to $\sigma$. We can decompose $\pi^\sigma(h) = \prod_{i \in N}\pi^\sigma_i(h)$ into each player's contribution. Let $\pi^\sigma_{-i}$ be the product of all players' contributions except that of player $i$ (including chance). For $I_i \in \Iset_i$ define $\pi^\sigma(I_i) = \sum_{h \in I_i}\pi^\sigma(h)$, as the probability of reaching information set $I_i$ given all players play according to $\sigma$. $\pi_i^\sigma(I_i)$ and $\pi_{-i}^\sigma(I_i)$ are defined similarly. Finally, let $\pi^\sigma(h,z) = \frac{\pi^\sigma(z)}{\pi^\sigma(h)}$ if $h \sqsubset z$, and zero otherwise. $\pi^\sigma_i(h,z)$ and $\pi^\sigma_{-i}(h,z)$ are defined similarly. Using this notation, \textit{expected payoff} for player $i$ is $u_i(\sigma) = \sum_{z \in Z}u_i(z)\pi^\sigma(z)$. BR, NE and SSE are defined as in NFGs.

Define $u_i(\sigma, h)$ as an expected utility given that the history $h$ is reached and all players play according to $\sigma$. A \textit{counterfactual value} $v_i(\sigma,I)$ is the expected utility given that the information set $I$ is reached and all players play according to strategy $\sigma$ except player $i$, which plays to reach $I$. Formally, $v_i(\sigma,I) = \sum_{h \in I, z \in Z}\pi^\sigma_{-i}(h)\pi^\sigma(h,z)u_i(z)$. And similarly counterfactual value for playing action $a$ in information set $I$ is $v_i(\sigma,I, a) = \sum_{h \in I, z \in Z, ha \sqsubset z}\pi^\sigma_{-i}(ha)\pi^\sigma(ha,z)u_i(z)$.

We define $S_i$ as a set of sequences of actions only for player $i$. $inf_1(s_i), s_i \in S_i$ is the information set where last action of $s_i$ was executed and $seq_i(I), I \in \Iset_i$ is sequence of actions of player $i$ to information set $I$.

\subsection{Quantal response model of bounded rationality}

Fully rational players always select the utility-maximizing strategy, i.e., the best response. Relaxing this assumption leads to a ``statistical version'' of best response, which takes into account the inevitable error-proneness of humans and allows the players to make systematic errors \cite{mcfadden1976quantal, mckelvey1995quantal}. 

\begin{definition} Let $G = (N,A,u)$ be an NFG. Function $QR:\Sigma_{\pr}\rightarrow\Sigma_\ps$ is a quantal response function of player $\ps$ if probability of playing action $a$ monotonically increases as expected utility for $a$ increases. Quantal function QR is called canonical if for some real-valued function $q$:
\begin{equation}
QR(\sigma, a^k) = \frac{q(u_\ps(\sigma, a^k))}{\sum_{a^i\in A_\ps}q(u_\ps(\sigma, a^i))} \quad\forall\sigma\in\Sigma_{\pr}, a^k\in A_\ps.
\end{equation}
\end{definition}

Whenever $q$ is a strictly positive increasing function, the corresponding $QR$ is a valid quantal response function. Such functions $q$ are called \textit{generators} of canonical quantal functions. The most commonly used generator in the literature is the exponential (logit) function~\cite{mckelvey1995quantal} defined as $q(x) = e^{\lambda x}$ where $\lambda> 0$. $\lambda$ drives the model's rationality. The player behaves uniformly randomly for $\lambda\rightarrow0$, and becomes more rational as $\lambda\rightarrow\infty$. We denote a logit quantal function as LQR.

In EFGs, we assume the bounded-rational player plays based on a quantal function in every information set separately, according to the counterfactual values.

\begin{definition} Let $G$ be an EFG. Function $QR:\Sigma_{\pr}\rightarrow\Sigma_\ps$ is a canonical couterfactual quantal response function of player $\ps$ with generator $q$ if for a strategy $\sigma_{\pr}$ it produces strategy $\sigma_\ps$ such that in every information set $I \in \Iset_\ps$, for each action $a^k \in A(I)$ it holds that
\begin{equation}
QR(\sigma_{\pr}, I, a^k) = \frac{q(v_\ps(\sigma,I,a^k))}{\sum_{a^i \in A(I)}q(v_\ps(\sigma, I, a^i))},
    \label{eq:efg_lqr}
\end{equation}
where $QR(\sigma_\pr, I,a^k)$ is the probability of playing action $a^k$ in information set $I$ and $\sigma = (\sigma_\pr, \sigma_\ps)$.
\end{definition}

We denote the canonical counterfactual quantal response function with the logit generator \textit{counterfactual logit quantal response (CLQR)}. CLQR differs from the traditional definition of logit agent quantal response (LAQR)~\cite{mckelvey1998quantal} in using counterfactual values instead of expected utilities. The main advantage of CLQR over LAQR is that CLQR defines a valid quantal strategy even in information sets unreachable due to a strategy of the opponent, which is necessary for applying regret-minimization algorithms explained later.

Because the logit quantal function is the most well-studied function in the literature with several deployed applications~\cite{Pita08,delle2014game,fang2017paws}, we focus most of our analysis and experimental results on (C)LQR. Without a loss of generality, we assume the quantal player is always player $\ps$.

\subsection{Metrics for evaluating quality of strategy}
In a two-player zero-sum game, the \textit{exploitability} of a given strategy is defined as expected utility that a fully rational opponent can achieve above the value of the game. Formally, exploitability $\mathcal{E}(\sigma_i)$ of strategy $\sigma_i\in\Sigma_i$ is
$
    \mathcal{E}(\sigma_i) =  u_{-i}(\sigma_i, \sigma_{-i}) - u_{-i}(\sigma^{NE}), \quad \sigma_{-i} \in BR_{-i}(\sigma_i).
$

We also intend to measure how much we are able to exploit an opponent's bounded-rational behavior. For this purpose, we define \textit{gain} of a strategy against quantal response as an expected utility we receive above the value of the game. Formally, gain $\mathcal{G}(\sigma_i)$ of strategy $\sigma_i$ is defined as
$
\mathcal{G}(\sigma_i) = u_i(\sigma_i, QR(\sigma_i)) - u_{i}(\sigma^{NE}).
$

General-sum games do not have the property that all NEs have the same expected utility. Therefore, we simply measure expected utility against LQR and BR opponents there.
\section{One-Sided Quantal Solution Concepts}
\label{sec:concepts}
This section formally defines two one-sided bounded-rational equilibria, where one of the players is rational and the other is subrational -- a saddle-point-type equilibrium called Quantal Nash Equilibrium (QNE) and a leader-follower-type equilibrium called Quantal Stackelberg Equilibrium (QSE). We show that contrary to their fully-rational counterparts, QNE differs from QSE even in zero-sum games. Moreover, we show that computing QSE in extensive-form games is an NP-hard problem. Full proofs of all our theoretical results are provided in the appendix.

\subsection{Quantal equilibria in normal-form games}
We first consider a variant of NE, in which one of the players plays a quantal response instead of the best response.

\begin{definition}
 Given a normal-form game $G = (N,A,u)$ and a quantal response function $QR$, a strategy profile $(\sigma_\pr^{QNE}, QR(\sigma_\pr^{QNE}))\in\Sigma$ describes a \emph{Quantal Nash Equilibrium} (QNE) if and only if $\sigma_{\pr}^{QNE}$ is a best response of player $\pr$ against quantal-responding player $\ps$. Formally: 
\begin{equation}
\sigma_{\pr}^{QNE} \in BR(QR(\sigma_{\pr}^{QNE})).
\end{equation}
\end{definition}

QNE can be seen as a concept between NE and Quantal Response Equilibrium (QRE)~\cite{mckelvey1995quantal}. While in NE, both players are fully rational, and in QRE, both players are bounded-rational, in QNE, one player is rational, and the other is bounded-rational.

\begin{theorem}
Computing a QNE strategy profile in two-player NFGs is a PPAD-hard problem.
\label{thm:one}
\end{theorem}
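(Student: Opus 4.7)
The plan is to reduce from the problem of computing an $\epsilon$-approximate Nash equilibrium of a two-player NFG, which is known to be PPAD-hard for $\epsilon$ inverse-polynomial in the input size. The guiding intuition is that in a QNE player $\pr$ is already required to play an exact best response, so if the quantal response of player $\ps$ can be forced to approximate a best response, a QNE will directly certify an $\epsilon$-NE of a suitably constructed game.

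Given a two-player NFG $G=(N,A,u)$ and approximation parameter $\epsilon$, I would construct $G'=(N,A,u')$ with $u'_{\pr}=u_{\pr}$ and $u'_{\ps}=M\cdot u_{\ps}$ for a positive scalar $M$ chosen below. Since only the payoffs of player $\ps$ are rescaled, the best-response correspondence of player $\pr$ coincides in $G$ and $G'$. Under the canonical logit QR with generator $q(x)=e^{\lambda x}$, the quantal response of player $\ps$ in $G'$ to any $\sigma_{\pr}$ equals the logit quantal response in $G$ with effective rationality $\lambda M$. Hence any QNE $(\sigma_{\pr}^{QNE}, \sigma_{\ps}^{QNE})$ of $G'$ is a profile in which player $\pr$ plays an exact BR in $G$ while player $\ps$ plays a logit response in $G$ with rationality $\lambda M$.

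A direct tail estimate on the logit distribution shows that $\lambda M = \Theta\bigl((1/\epsilon)\log(|A_\ps|\, u_{\max}/\epsilon)\bigr)$, where $u_{\max}$ bounds $|u_{\ps}|$, is already enough to make the logit response an $\epsilon$-best response in $G$. With $\lambda$ fixed by the instance, the required $M$ is polynomial in the input size and in $1/\epsilon$, so the reduction runs in polynomial time. The resulting QNE of $G'$ is therefore an $\epsilon$-NE of $G$, so a polynomial-time algorithm for QNE would yield a polynomial-time algorithm for $\epsilon$-NE, establishing PPAD-hardness.

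I expect the main obstacle to be the quantitative analysis of how sharply the logit response concentrates on near-optimal actions; in particular, the estimate must handle the case in which several of player $\ps$'s actions are nearly tied in payoff, so that probability mass is spread across them while the aggregate response still witnesses an $\epsilon$-BR. A secondary subtlety is casting ``computing a QNE'' as a search problem with polynomial-bit witnesses, for instance via an approximate fixed point of the composed correspondence $BR\circ QR$, so that the target of the reduction lies properly in PPAD.
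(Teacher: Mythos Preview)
Your proposal is correct and follows essentially the same route as the paper: reduce from computing an $\epsilon$-NE by forcing the logit quantal response of player~$\ps$ to become an $\epsilon$-best response, so that a QNE---where $\pr$ already best-responds exactly---certifies an $\epsilon$-NE. The paper achieves this by selecting a sufficiently large rationality parameter $\lambda^*$ in the same game, whereas you keep $\lambda$ fixed and rescale $u_\ps$ by~$M$; for the logit generator these are equivalent since only the product $\lambda M$ enters the softmax, and your version has the minor advantage of making it explicit that the transformation is a polynomial-time modification of the \emph{game} rather than of the quantal model. For the quantitative step, the paper proves a dedicated lemma bounding $\max(\mathcal{A})-\mathrm{softmax}_\lambda(\mathcal{A})$ by $\frac{W(1/e)}{\lambda}+\frac{n-2}{\lambda e}$ via a Lambert-$W$ calculation, which is exactly the ``near-tie'' analysis you flag as the main obstacle; your proposed tail estimate $\lambda M=\Theta\bigl((1/\epsilon)\log(|A_\ps|\,u_{\max}/\epsilon)\bigr)$ is looser by a log factor but entirely sufficient and arguably more standard. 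Your remark about polynomial-bit witnesses pertains to PPAD \emph{membership}, which is not needed for the hardness claim, so you can safely drop it.
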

\begin{sproof}
We do a reduction from the problem of computing $\epsilon$-NE \cite{daskalakis2009complexity}. We derive an upper bound on a maximum distance between best response and logit quantal response, which goes to zero with $\lambda$ approaching infinity. For a given $\epsilon$, we find $\lambda$, such that QNE is $\epsilon$-NE. Detailed proofs of all theorems are provided in the appendix. 
\end{sproof}

QNE usually outperforms NE against LQR in practice as we show in the experiments. However, it cannot be guaranteed as stated in the Proposition~\ref{prop:badqne}.
\begin{proposition}
    For any $LQR$ function, there exists a zero-sum normal-form game $G = (N,A,u)$ with a unique NE~-~$(\sigma_\pr^{NE},\sigma_\ps^{NE})$ and QNE - $(\sigma_\pr^{QNE},QR(\sigma_\pr^{QNE}))$ such that $u_\pr(\sigma_\pr^{NE},QR(\sigma_\pr^{NE})) > u_\pr(\sigma_\pr^{QNE},QR(\sigma_\pr^{QNE}))$.
    \label{prop:badqne}
\end{proposition}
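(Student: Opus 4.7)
The plan is to construct an explicit zero-sum normal-form game, parameterised by $\lambda$ if needed, that has a unique NE and a unique QNE in which NE strictly outperforms QNE against LQR. Concretely, I would use a matching-pennies base augmented by an asymmetric ``trap'' column:
\[
u_\pr \;=\; \begin{pmatrix} 1 & -1 & T \\ -1 & 1 & T' \end{pmatrix},
\]
with $T > T' \geq 0$ (and $T,T'$ allowed to depend on $\lambda$). The trap column is strictly dominated for $\ps$ at the matching-pennies mix, but it contributes unequally to $\pr$'s two rows; this asymmetry is what will split QNE from NE.

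For NE uniqueness, the first two columns form matching pennies whose unique equilibrium is $\pr=(\tfrac12,\tfrac12)$ with value $0$; at this profile $u_\ps(t)=-(T+T')/2<0$ while $u_\ps(l)=u_\ps(r)=0$, so $\ps$'s best-response support is exactly $\{l,r\}$, and the $\pr$-indifference condition then pins down $\sigma_\ps^{NE}=(\tfrac12,\tfrac12,0)$ uniquely. Because $\ps$'s two main actions are tied, LQR gives $q_l=q_r$, so that $u_\pr(\sigma_\pr^{NE},QR(\sigma_\pr^{NE})) = \tfrac{T+T'}{2}\,q_t^{NE}$, where $q_t^{NE}>0$ is the LQR weight on the trap at uniform~$\pr$.

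Writing $\sigma_\pr^{QNE}=(p^*,1-p^*)$, the fixed-point condition $\sigma_\pr^{QNE}\in BR(QR(\sigma_\pr^{QNE}))$ reduces, after ruling out pure candidates by a sign check on the best-response difference, to the indifference equation $4\sinh(\lambda(2p^*-1)) = (T-T')\exp\!\bigl(-\lambda(T'+(T-T')p^*)\bigr)$. Its left-hand side is strictly increasing in $p^*$ and its right-hand side strictly decreasing when $T>T'$, giving a unique interior solution $p^*>\tfrac12$. By exactly the same indifference, $u_\pr(\sigma_\pr^{QNE},QR(\sigma_\pr^{QNE})) = \tfrac{T+T'}{2}\,q_t^{QNE}$. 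Since $p^*>\tfrac12$ strictly decreases $u_\ps(t)=-Tp^*-T'(1-p^*)$ relative to its NE value while keeping the two main columns no more favourable to $\ps$ than they were under LQR at NE, the LQR distribution shifts weight off the trap, so $q_t^{QNE}<q_t^{NE}$ and hence $u_\pr(NE)>u_\pr(QNE)$.

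The main obstacle is guaranteeing this picture for every $\lambda>0$: for a fixed $(T,T')$, a pure $a_1$-QNE can coexist at small $\lambda$, where LQR is nearly uniform and $a_1$ strictly dominates under LQR-expected payoffs, and that pure QNE would exploit the noise so effectively that it beats NE, spoiling the claim. I would handle this by letting $T$ shrink with $\lambda$---for instance $T=c\lambda$ with a sufficiently small constant $c>0$ and $T'=0$---so the trap becomes a small perturbation relative to the quantal noise. A direct monotonicity check then shows the indifference equation's LHS lies below its RHS at $p^*=\tfrac12$ and above it at $p^*=1$, ruling out pure QNEs for every $\lambda>0$, and substituting the resulting $p^*$ into the explicit LQR formulae quantifies the strict gap $q_t^{NE}-q_t^{QNE}>0$.
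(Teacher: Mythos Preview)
Your construction is correct. The matching-pennies-plus-trap design does exactly what you claim: the unique NE is $(\tfrac12,\tfrac12)$ by dominance of the trap column for $\ps$; the interior QNE indifference equation has a unique root $p^*>\tfrac12$ by the monotonicity you identify; and the reduction of both $u_\pr(\text{NE})$ and $u_\pr(\text{QNE})$ to $\tfrac{T+T'}{2}\,q_t$ together with the inequality $q_t^{QNE}<q_t^{NE}$ (which follows since both $\cosh(\lambda(2p^*-1))>1$ and $T'+(T-T')p^*>(T+T')/2$) yields the strict gap. Your handling of the pure-$a_1$ obstruction via $T=c\lambda$, $T'=0$, $c<4$ is also sound, since then $4\sinh(\lambda)>4\lambda>c\lambda\geq c\lambda e^{-c\lambda^2}$ at $p=1$, and the $p=0$ pure candidate is immediately ruled out by the positive sign of the best-response difference there.

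The paper takes a much shorter route. It simply exhibits one concrete $2\times 3$ zero-sum matrix, numerically computes the unique NE $(\tfrac16,\tfrac56)$ and the QNE $(0.1744,0.8256)$ for $\lambda=1$, and checks the two expected utilities against LQR directly. The extension to arbitrary $\lambda>0$ is handled by the observation that rescaling all utilities by $1/\lambda$ leaves the logit-QR probabilities, the NE, and the QNE strategies unchanged (since $e^{\lambda\cdot u/\lambda}=e^{u}$), while preserving the strict inequality between the two payoffs. This rescaling trick is the main thing you are missing: had you fixed any single $(T,T')$ that works for $\lambda=1$ and then invoked scale-invariance, you would avoid the entire $\lambda$-dependent parameterisation. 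On the other hand, your argument is fully analytic and explains \emph{why} QNE underperforms---the best-response pull toward the trap-favoured row makes the trap strictly less attractive to the quantal opponent---whereas the paper's proof is a bare numerical verification.
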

The second solution concept is a variant of SSE in situations, when the follower is bounded-rational.

\begin{definition} Given a normal-form game $G = (N,A,u)$ and a quantal response function $QR$, a mixed strategy $\sigma_\pr^{QSE}\in\Sigma_\pr$ describes a \emph{Quantal Stackleberg Equilibrium} (QSE) if and only if
\begin{equation}
\label{eq:qse:primal}
\sigma_\pr^{QSE} = \argmax_{\sigma_\pr\in\Sigma_\pr} u_\pr(\sigma_\pr, QR(\sigma_\pr)).
\end{equation}
\end{definition}

In QSE, player $\pr$ is fully rational and commits to a strategy that maximizes her payoff given that player $\ps$ observes the strategy and then responds according to her quantal function. This is a standard assumption, and even in problems where the strategy is not known in advance, it can be learned by playing or observing. QSE always exists because all utilities are finite, the game has a finite number of actions, player $\pr$ utilities are continuous on her strategy simplex, and the maximum is hence always reached.

\begin{observation}
\label{obs:qse_nfg_mp}
Let $G$ be a normal-form game and a $q$ be a generator of a canonical quantal function. Then QSE of $G$ can be formulated as a non-convex mathematical program:
\begin{flalign}
\begin{aligned}
\label{eq:qse:zs}
\max_{\sigma_\pr\in\Sigma_\pr}&\frac{\sum_{a_\ps\in A_\ps}u_\pr(\sigma_\pr, a_\ps)q(u_\ps(\sigma_\pr, \pi_\ps))}{\sum_{a_\ps\in A_\ps}q(u_\ps(\sigma_\pr,a_\ps))}.
\end{aligned}
\end{flalign}
\end{observation}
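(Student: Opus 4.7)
The plan is to unroll the QSE definition into the closed-form ratio given in the statement, and then separately comment on non-convexity. First I would rewrite the leader's expected payoff against a quantal follower as
$u_\pr(\sigma_\pr, QR(\sigma_\pr)) = \sum_{a_\ps \in A_\ps} u_\pr(\sigma_\pr, a_\ps)\, QR(\sigma_\pr, a_\ps)$,
which is just the expectation of $u_\pr$ over the follower's mixed strategy $QR(\sigma_\pr)$ for a fixed leader strategy $\sigma_\pr$.

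Next I would substitute the canonical quantal response formula, replacing $QR(\sigma_\pr, a_\ps)$ by $q(u_\ps(\sigma_\pr, a_\ps))/\sum_{a^i\in A_\ps}q(u_\ps(\sigma_\pr, a^i))$. The normalizing denominator does not depend on the outer summation index $a_\ps$ and can therefore be pulled out of the outer sum, yielding exactly the ratio in~(\ref{eq:qse:zs}). Combining this identity with the QSE definition~(\ref{eq:qse:primal}) produces the mathematical program, and since the feasible set $\Sigma_\pr$ is already the probability simplex, no further constraint reformulation is needed beyond the implicit $\sigma_\pr \in \Sigma_\pr$.

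Finally I would justify the non-convexity claim. Because $\sigma_\pr$ enters both numerator and denominator through $q\circ u_\ps$, the objective is a ratio of two nonlinear functions of $\sigma_\pr$; even for the logit generator $q(x)=e^{\lambda x}$ the resulting expression is a ratio of sums of exponentials of linear forms in $\sigma_\pr$, which is in general neither concave nor quasi-concave. A small counterexample game in which the objective exhibits two distinct local maxima over $\Sigma_\pr$ would make the non-convexity concrete and complete the observation.

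The main obstacle is essentially notational rather than mathematical: ensuring that the shorthand $u_i(\sigma_\pr, a_\ps)$ for the expectation over a pure follower action matches the paper's earlier convention, and reconciling the apparent typographical $\pi_\ps$ in the numerator of~(\ref{eq:qse:zs}) with what must be the pure action $a_\ps$. Once that convention is fixed, the derivation collapses to a single substitution of the QR definition into the payoff expectation, so the observation essentially proves itself.
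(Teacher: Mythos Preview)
Your derivation is correct and matches exactly what the paper relies on. In fact, the paper does not provide a separate proof for this observation at all: it is stated as an ``Observation'' precisely because the formula follows immediately by substituting the canonical quantal response definition into the QSE objective~(\ref{eq:qse:primal}), which is exactly the one-line substitution you describe. Your identification of the typo ($\pi_\ps$ should read $a_\ps$) is also correct. For the non-convexity claim, the paper does not argue analytically but instead points to Example~\ref{QSE_example} and Figure~\ref{LQRobjgraph}, where Game~1 against a logit follower with $\lambda=0.92$ is shown to have three local extrema; your suggestion to exhibit such a counterexample is therefore precisely the route the paper takes.
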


\begin{figure}
\begin{minipage}{0.49\linewidth}
\centering
\includegraphics[width=\linewidth]{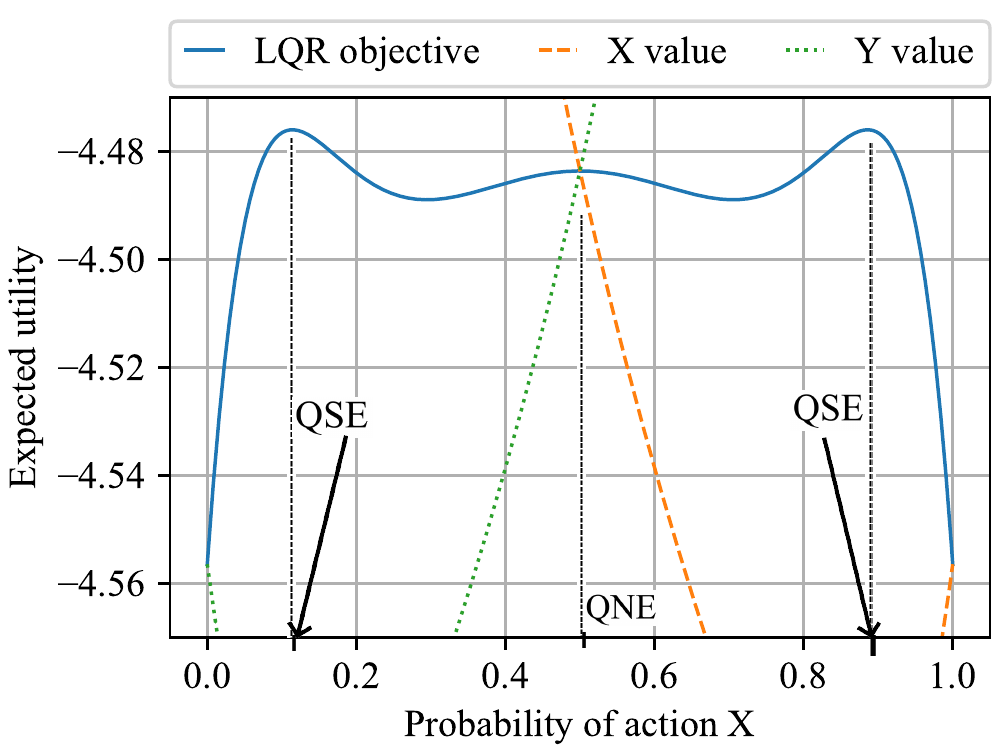}
\end{minipage}
\begin{minipage}{0.49\linewidth}
\scriptsize
 \centering
 \begin{tabular}{ccccc}
  & \multicolumn{4}{c}{Game 1} \\
  & A & B & C & D\\ \cline{2-5}
 X & \multicolumn{1}{|c|}{-4} & \multicolumn{1}{|c|}{-5}  & \multicolumn{1}{|c|}{8} & \multicolumn{1}{|c|}{-4} \\ \cline{2-5}
 Y & \multicolumn{1}{|c|}{-5} & \multicolumn{1}{|c|}{-4} & \multicolumn{1}{|c|}{-4} & \multicolumn{1}{|c|}{8} \\ \cline{2-5}
 \end{tabular}

\vspace{0.2cm}
\begin{tabular}{ccc}
  &  \multicolumn{2}{c}{Game 2} \\
  & A & B \\ \cline{2-3}
 X & \multicolumn{1}{|c|}{-2} & \multicolumn{1}{|c|}{8} \\ \cline{2-3}
 Y & \multicolumn{1}{|c|}{-2.2} & \multicolumn{1}{|c|}{-2.5} \\ \cline{2-3}
\end{tabular}
\end{minipage}
\caption{(Left) An example of the expected utility against LQR in Game 1. The X-axis shows the strategy of the rational player, and the Y-axis shows the expected utility. The X- and Y-value curves show the utility for playing the corresponding action given the opponent's strategy is a response to the strategy on X-axis.  A detailed description is in Example~\ref{QSE_example}. (Right) An example of two normal-form games. Each row of the depicted matrices is labeled by a first player strategy, while the second player's strategy labels every column. The numbers in the matrices denote the utilities of the first player. We assume the row player is $\pr$.
}
\label{LQRobjgraph}
\end{figure}

\begin{example}
In Figure~\ref{LQRobjgraph} we depict a utility function of $\pr$ in Game 1 against LQR with $\lambda = 0.92$. As we show, both actions have the same expected utility in the QNE. Therefore, it is a best response for Player $\pr$, and she has no incentive to deviate. However, the QNE does not reach the maximal expected utility, which is achieved in two global extremes, both being the QSE. Note that even such a small game like Game 1 can have multiple local extremes: in this case 3.
 \label{QSE_example}
\end{example}


Example~\ref{QSE_example} shows that finding QSE is a non-concave problem even in zero-sum NFGs, and it can have multiple global solutions. Moreover, facing a bounded-rational opponent may change the relationship between NE and SSE. They are no longer interchangeable in zero-sum games, and QSE may use strictly dominated actions, e.g in Game 2 from Figure~\ref{LQRobjgraph}.


\subsection{Quantal equilibria in extensive-form games}

In EFGs, QNE and QSE are defined in the same manner as in NFGs. However, instead of the normal-form quantal response, the second player acts according to the counterfactual quantal response. QSE in EFGs can be computed by a mathematical program provided in the appendix. The natural formulation of the program is non-linear with non-convex constraints indicating the problem is hard. We show that the problem is indeed NP-hard, even in zero-sum games.


\begin{theorem}
Let $G$ be a two-player imperfect-information EFG with perfect recall and $QR$ be a quantal response function. Computing a QSE in $G$ is NP-hard if one of the following holds:
(1) $G$ is zero-sum and $QR$ is generated by a logit generator $q(x) = exp(\lambda x)$, $\lambda>0$;
or (2) $G$ is general-sum.
\label{thm:np}
\end{theorem}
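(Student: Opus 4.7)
\begin{sproof}
My plan is to give two separate reductions, one for the general-sum case and one for the more delicate zero-sum logit case, from well-known NP-hard problems into the QSE-computation problem.

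For part (2), the general-sum case, I would reduce from 3-SAT. Given an instance with variables $x_1,\dots,x_n$ and clauses $C_1,\dots,C_m$, I would build an EFG in which chance first selects a clause $C_j$ uniformly at random, and the leader must then play a strategy that effectively assigns a truth value to every variable occurring in that clause. The information-set structure on the leader side collapses, for each variable $x_i$, all histories where $x_i$ is being set across different clauses into a single information set; perfect recall does not prevent this because the leader learns the identity of the clause only after her commitment is fixed. The follower has a final move that, together with utilities carefully tuned, pays the leader a large amount in clause $C_j$ exactly when the chosen assignment satisfies $C_j$. Since we are in the general-sum setting, we are free to make the follower's canonical quantal response concentrate sharply on the "satisfies" action whenever the leader's assignment is good, so that leader's expected utility against $QR$ is essentially $1/m$ times the number of satisfied clauses, plus a $QR$-smoothing term that can be made polynomially small. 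Solving QSE then distinguishes SAT from UNSAT.

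For part (1), the zero-sum logit case, the above template fails because SSE coincides with NE in zero-sum games, so one cannot encode combinatorial structure through the follower's \emph{utilities}. Instead I plan to exploit the structure of the logit objective itself. I would reduce from a problem such as PARTITION or MAX-2-SAT: build a zero-sum EFG in which chance selects one of $n$ gadget-subgames, and within each gadget the leader controls the probability with which a certain "witness" history is reached, while the follower's counterfactual logit response at an information set inside the gadget is a ratio $e^{\lambda v}/\bigl(e^{\lambda v}+e^{\lambda v'}\bigr)$ whose numerator is driven by the leader's reach probability. The leader's overall expected utility becomes a weighted sum of such logit terms, one per gadget; by tuning payoffs and $\lambda$, the sum is maximized precisely when the reach probabilities at the gadgets realize a prescribed discrete pattern (e.g., an equal-sum partition). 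A perfect-recall information set shared across gadgets forces the leader to commit to a single realization plan, encoding the combinatorial constraint. Strict zero-sum-ness is maintained by mirroring payoffs.

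The main obstacle will be the zero-sum case. Two subtleties must be handled: (i) one must prevent the leader from bypassing the gadget by randomising uniformly, which requires showing that the logit objective is strictly suboptimal away from the encoded discrete solutions, and (ii) the fixed-point nature of CLQR -- counterfactual values at an information set depend on the follower's strategy at later information sets -- must be tamed so that the mapping from the leader's strategy to the follower's logit probabilities inside each gadget is effectively local. I would address (i) by choosing payoffs and $\lambda$ polynomially, so that the curvature of $x\mapsto e^{\lambda x}/(e^{\lambda x}+e^{\lambda x'})$ creates a polynomial gap between the value at a discrete optimum and at any non-discrete mixture, and (ii) by designing the subtree below each follower information set to be a trivial one-move subgame so that CLQR reduces to a static logit over leader-controlled quantities. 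Once these points are settled, polynomiality of the construction and correctness of the reduction follow by direct computation.
\end{sproof}
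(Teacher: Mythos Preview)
Your zero-sum sketch has the right target (a PARTITION-style reduction with per-item gadgets and a single follower information set linking them), but the mechanism you propose for forcing discrete leader choices does not work, and this is the crux of the construction.

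Concretely: in any zero-sum gadget where the follower's two logit options have values that are (affine functions of) weighted sums $\sum_i \sigma_i x_i$ and $\sum_i (1-\sigma_i)x_i$ of the leader's reach probabilities, the leader's expected utility against the logit response is \emph{maximized} when the follower is indifferent. Indifference is achieved by a balanced partition, but it is \emph{also} achieved by the leader simply playing $\sigma_i=\tfrac12$ for every $i$, regardless of whether the PARTITION instance is solvable. So the ``curvature of $e^{\lambda v}/(e^{\lambda v}+e^{\lambda v'})$'' does not create a gap in the direction you need; on the contrary, it pushes the leader \emph{toward} uniform mixing. You cannot repair this by choosing $\lambda$ (it is fixed by the theorem statement), and rescaling payoffs only rescales $\lambda$ and leaves the qualitative picture unchanged.

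The paper's construction resolves this by coupling each leader decision with a second, independent subtree: a small zero-sum NFG whose QSE landscape has exactly two symmetric optima at probabilities $s$ and $1-s$ with $s\neq\tfrac12$, and strictly lower leader value at the uniform point. The leader's action in item $i$'s information set simultaneously determines her mixture in this NFG subtree and her ``assignment'' in the partition subtree; hence to keep the NFG subtrees optimal she must commit to $s$ or $1-s$ in every coordinate, which encodes a genuine $\{0,1\}$ partition. Only then does achieving the maximal partition-subtree value force the instance to be solvable. Your proposal lacks any analogue of this forcing device.

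A second, smaller issue: you place the cross-gadget information set on the \emph{leader} side (``a perfect-recall information set shared across gadgets forces the leader to commit to a single realization plan''). In the working reduction it is the \emph{follower} who has one information set spanning all partition subtrees (so her two actions aggregate all items), while the leader has $n$ separate information sets so she can make a distinct $s$/$1{-}s$ choice per item.

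For the general-sum case, your 3-SAT route is different from the paper's PARTITION route and is not obviously wrong, but it inherits the same ``why would the leader play purely?'' problem: a variable appearing with both polarities gives the leader an incentive to hedge. The paper handles this uniformly with the same coupling trick, replacing the zero-sum NFG subtree by a $2\times 2$ coordination game (two symmetric pure QSEs, uniform strictly worst), which works for \emph{any} quantal response satisfying monotonicity. Your sketch instead relies on making the follower's response ``concentrate sharply,'' which at best proves the result for a restricted class of QR functions rather than for all of them as the paper does.
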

\begin{sproof}

We reduce from the partition problem. The key subtree of the constructed EFG is zero-sum. For each item of the multiset, the leader chooses an action that places the item in the first or second subset. The follower has two actions; each gives the leader a reward equal to the sum of items in the corresponding subset. If the sums are not equal, the follower chooses the lower one because of the zero-sum assumption. Otherwise, the follower chooses both actions with the same probability, maximizing the leader's payoff.

A complication is that the leader could split each item in half by playing uniformly. This is prevented by combining the leader's actions for placing an item with an action in a separate game with two symmetric QSEs. We use a collaborative coordination game in the non-zero-sum case and a game similar to game in Figure~\ref{LQRobjgraph} in the zero-sum case.
\end{sproof}

The proof of the non-zero-sum part of Theorem~\ref{thm:np} relies only on the assumption that the follower plays action with higher reward with higher probability. This also holds for a rational player; hence, the theorem provides an independent, simpler, and more general proof of NP-hardness of computing Stackelberg equilibria in EFGs, which unlike \cite{letchford2010computing} does not require the tie-breaking rule.

\section{Computing One-Sided Quantal Equilibria}

This section describes various algorithms and heuristics for computing one-sided quantal equilibria introduced in the previous section. In the first part, we focus on QNE, and based on an empirical evaluation; we claim that regret-minimization algorithms converge to QNE in both NFGs and EFGs. The second part then discusses gradient-based algorithms for computing QSE and analyses cases when regret minimization methods will or will not converge to QSE. 


\subsection{Algorithms for computing QNE}
CFR \cite{zinkevich2008regret} is a state-of-the-art algorithm for approximating NE in EFGs. CFR is a form of regret matching \cite{hart2000simple} and uses iterated self play to minimize regret at each information set independently. CFR-f \cite{davis2014using} is a modification capable of computing strategy against some opponent models. In each iteration, it performs a CFR update for one player and computes the response for the other player. We use CFR-f with a quantal response and call it CFR-QR. We initialize the rational player's strategy as uniform and compute the quantal response against it. Then, in each iteration, we update the regrets for the rational player, calculate the corresponding strategy, and compute a quantal response to this new strategy again. In normal-form games, we use the same approach with simple regret matching (RM-QR).




\begin{conjecture}
 (a) In NFGs, RM-QR converges to QNE; while (b) in EFGs, CFR-QR converges to QNE.
 \label{convergence}
\end{conjecture}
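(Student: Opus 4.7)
The plan is to establish both parts by combining the regret-minimization guarantees with the continuity of the quantal response map, and then tie the resulting time averages to a fixed point of $\sigma_\pr \mapsto BR(QR(\sigma_\pr))$. First, I would observe that this correspondence is nonempty, convex-valued and upper hemi-continuous (since $BR$ is upper hemi-continuous with convex values on the simplex and canonical $QR$ is continuous), so by Kakutani a QNE exists. Denote by $\sigma_\pr^{(t)}$ and $\sigma_\ps^{(t)}=QR(\sigma_\pr^{(t)})$ the iterates produced by RM-QR, and write $\bar\sigma_\pr^T$, $\bar\sigma_\ps^T$ for the time averages after $T$ iterations.

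For part (a), regret matching applied against the adaptive sequence $\{\sigma_\ps^{(t)}\}$ guarantees the rational player's external regret $R_\pr^T = O(\sqrt{T})$. Equivalently,
\begin{equation}
u_\pr(\bar\sigma_\pr^T,\bar\sigma_\ps^T) \;\geq\; \max_{\sigma_\pr'\in\Sigma_\pr} u_\pr(\sigma_\pr',\bar\sigma_\ps^T) - O(T^{-1/2}).
\end{equation}
Pick any convergent subsequence $\bar\sigma_\pr^{T_k}\to\sigma_\pr^\star$, $\bar\sigma_\ps^{T_k}\to\sigma_\ps^\star$ (exists by compactness). Taking the limit gives $\sigma_\pr^\star\in BR(\sigma_\ps^\star)$. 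The crucial step is to identify $\sigma_\ps^\star$ with $QR(\sigma_\pr^\star)$: I would argue that, along the tail of the sequence, $\sigma_\pr^{(t)}$ concentrates near $\bar\sigma_\pr^{T_k}$ in the Cesaro sense, so by uniform continuity of the canonical $QR$ on the compact simplex, the Cesaro-averaged responses $\bar\sigma_\ps^{T_k} = \frac{1}{T_k}\sum_t QR(\sigma_\pr^{(t)})$ converge to $QR(\sigma_\pr^\star)$. Combining the two yields $\sigma_\pr^\star\in BR(QR(\sigma_\pr^\star))$, i.e., $(\sigma_\pr^\star,QR(\sigma_\pr^\star))$ is a QNE.

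For part (b), I would lift the argument to EFGs via the standard CFR regret decomposition: the total external regret of the rational player is bounded by the sum of the per-information-set counterfactual regrets, and regret matching keeps each of these in $O(\sqrt{T})$. The counterfactual quantal response is defined information-set-locally from the counterfactual values $v_\ps(\sigma,I,a)$, which are continuous in the joint behavioral strategy. Hence at every $I\in\Iset_\ps$ the same continuity-plus-averaging argument identifies any accumulation point of $\bar\sigma_\ps^T(I,\cdot)$ with $QR(\bar\sigma_\pr^\star,I,\cdot)$, while the CFR guarantee simultaneously makes $\bar\sigma_\pr^\star$ a best response to the accumulated $\bar\sigma_\ps^T$. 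Together these give convergence of $(\bar\sigma_\pr^T,\bar\sigma_\ps^T)$ to a QNE of the EFG.

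The main obstacle, and the reason I expect this is stated as a conjecture rather than a theorem, is precisely the identification $\bar\sigma_\ps^T\to QR(\sigma_\pr^\star)$. Uniform continuity only gives $QR(\bar\sigma_\pr^T)\to QR(\sigma_\pr^\star)$, but $QR$ is nonlinear, so in principle the iterates $\sigma_\pr^{(t)}$ could oscillate in a way that makes $\frac{1}{T}\sum_t QR(\sigma_\pr^{(t)})$ drift away from $QR(\bar\sigma_\pr^T)$; ruling this out seems to require either a Jensen-type bound exploiting the concavity of $u_\pr$ in one coordinate together with the monotonicity properties of $QR$, or a Lyapunov function (analogous to the potential arguments used for fictitious play in zero-sum and potential games) that forces $\|\sigma_\pr^{(t+1)}-\sigma_\pr^{(t)}\|\to 0$. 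Establishing such stability in full generality — rather than under structural restrictions on $G$ and on the generator $q$ — is the step I would expect to require genuinely new ideas beyond standard CFR analysis.
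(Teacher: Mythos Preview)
The paper does not prove this statement at all: it is deliberately labeled a \emph{conjecture} and is supported only by empirical evidence (over $2\times 10^4$ games, with $\epsilon<10^{-6}$ after fewer than $10^5$ iterations). The appendix even contains a section ``Attempts to Prove Conjecture~\ref{convergence}'' listing three standard approaches (the CFR-$f$ soundness argument, the CFR-BR folk-theorem argument, and the current-strategy analysis of CFR-BR) together with the exact point at which each one breaks down. There is therefore no proof in the paper to compare against; your proposal is an attempted proof of an open statement.

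Your sketch contains two genuine gaps, only one of which you flag. The obstacle you identify in your last paragraph --- that $\frac{1}{T}\sum_t QR(\sigma_\pr^{(t)})$ need not approach $QR(\bar\sigma_\pr^T)$ because $QR$ is nonlinear --- is precisely the obstruction the authors hit when adapting the CFR-$f$ proof: the key exchange-of-strategies step there relies on the pretty-good-response property, and Proposition~\ref{prop:notPGR} shows canonical quantal responses never satisfy it. So your diagnosis of the hard step matches theirs.

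There is, however, an earlier gap you do not acknowledge: your displayed inequality is not what vanishing external regret delivers. Regret matching against the adaptive sequence $\{\sigma_\ps^{(t)}\}$ yields
\[
\frac{1}{T}\sum_{t=1}^T u_\pr\bigl(\sigma_\pr^{(t)},\sigma_\ps^{(t)}\bigr)\;\ge\;\max_{\sigma_\pr'\in\Sigma_\pr}u_\pr\bigl(\sigma_\pr',\bar\sigma_\ps^T\bigr)-O(T^{-1/2}),
\]
with the \emph{diagonal} time-average on the left, not $u_\pr(\bar\sigma_\pr^T,\bar\sigma_\ps^T)$. Replacing the diagonal average by the product of averages is exactly the step that, in the usual folk-theorem proof, is justified by the \emph{opponent} also having vanishing regret --- and a quantal responder has strictly positive regret. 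The paper says this explicitly: the CFR-BR route fails because ``the responding player has no positive regret, which obviously does not hold in case of bounded rationality.'' Hence even your conclusion $\sigma_\pr^\star\in BR(\sigma_\ps^\star)$ is unjustified, independently of whether $\sigma_\ps^\star = QR(\sigma_\pr^\star)$.
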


Conjecture~\ref{convergence} is based on empirical evaluation on more than $2\times 10^4$ games. In each game, the resulting strategy of player $\pr$ was $\epsilon$-BR to the quantal response of the opponent with epsilon lower than $10^{-6}$ after less than $10^5$ iterations.


\label{efg:heu}

QNE provides strategies exploiting a quantal opponent well, but performance is at the cost of substantial exploitability. We propose two heuristics that address both gain and exploitability simultaneously. The first one is to play a convex combination of QNE and NE strategy. We call this heuristic algorithm \textbf{COMB}. We aim to find a parameter $\alpha$ of the combination that maximizes the utility against LQR. However, choosing the correct $\alpha$ is, in general, a non-convex, non-linear problem. We search for the best $\alpha$ by sampling possible $\alpha$s and choosing the one with the best utility. The time required to compute one combination's value is similar to the time required to perform one iteration of the RM-QR algorithm. Sampling the $\alpha$s and checking the utility hence does not affect the scalability of COMB. The gain is also guaranteed to be greater or equal to the gain of the NE strategy, and as we show in the results, some combinations achieve higher gains than both the QNE and the NE strategies.

The second heuristic uses a restricted response approach~\cite{johanson2008computing}, and we call it \textbf{restricted quantal response (RQR)}. The key idea is that during the regret minimization, we set probability $p$, such that in each iteration, the opponent updates her strategy using (i) LQR with probability $p$ and (ii) BR otherwise. We aim to choose the parameter $p$ such that it maximizes the expected payoff. Using sampling as in COMB is not possible, since each sample requires to rerun the whole RM. To avoid the expensive computation, we start with $p=0.5$ and update the value during the iterations. In each iteration, we approximate the gradient of gain with respect to $p$ based on a change in the value after both the LQR and the BR iteration. We move the value of $p$ in the gradient's approximated direction with a step size that decreases after each iteration. However, the strategies do change tremendously with $p$, and the algorithm would require many iterations to produce a meaningful average strategy. Therefore, after a few thousands of iterations, we fix the parameter $p$ and perform a clean second run, with $p$ fixed from the first run. Similarly to COMB, RQR achieves higher gains than both the QNE and the NE and performs exceptionally well in terms of exploitability with gains comparable to COMB.

We adapted both algorithms from NFGs also to EFGs. The COMB heuristic requires to compute a convex combination of strategies, which is not straightforward in EFGs. Let $p$ be a combination coefficient and $\sigma_i^1$, $\sigma_i^2 \in \Sigma_i$ be two different strategies for the player $i$. The convex combination of the strategies is a strategy $\sigma_i^3 \in \Sigma$ computed for each information set $I_i \in \Iset_i$ and action $a \in A(I_i)$ as follows:

\label{convex}
\begin{align}
    \sigma_i^3&(I_i)(a) = \\&\frac{\pi_i^{\sigma^1}(I_i)\sigma_i^1(I_i)(a)p + \pi_i^{\sigma^2}(I_i)\sigma_i^2(I_i)(a)(1-p)}{\pi_i^{\sigma^1}(I_i)p + \pi_i^{\sigma^2}(I_i)(1-p)} \nonumber
\end{align}

We search for a value of $p$ that maximizes the gain, and we call this approach the counterfactual COMB. Contrary to COMB, the RQR can be directly applied to EFGs. The idea is the same, but instead of regret matching, we use CFR. We call this heuristic algorithm the counterfactual RQR.

\subsection{Algorithms for computing QSE}

In general, the mathematical programs describing the QSE in NFGs and EFGs are non-concave, non-linear problems. We use the gradient ascent (GA) methods~\cite{boyd2004convex} to find these programs' local optima. In case a program's formulation is concave, the GA will reach a global optimum. However, both formulations of QSE contain a fractional part, corresponding to a definition of the follower's canonical quantal function. Because concavity is not preserved under division, accessing conditions of the concavity of these programs is difficult. We construct provably globally optimal algorithms for QSE in our concurrent papers~\cite{cerny2020qseefg,cerny2020}. The GA performs well on small games, but it does not scale at all even for moderately sized games, as we show in the experiments.


Because QSE and QNE are usually non-equivalent concepts even in zero-sum games (see Figure 1), the regret-minimization algorithms will not converge to QSE. However, in case a quantal function satisfies the so-called \textit{pretty-good-response} condition, the algorithm converges to a strategy of the leader exploiting the follower the most \cite{davis2014using}. We show that a class of simple (i.e., attaining only a finite number of values) quantal functions satisfy a pretty-good-responses condition. 

\begin{proposition}
Let $G = (N,A,u)$ be a zero-sum NFG, $QR$ a quantal response function of the follower, which depends only on the ordering of expected utilities of individual actions. Then the RM-QR algorithm converges to QSE.
\label{prop:RMQRconv}
\end{proposition}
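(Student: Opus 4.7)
The plan is to invoke the convergence result of CFR-$f$ by \cite{davis2014using} (specialized to normal-form games as RM-QR), which guarantees that when the response function satisfies the so-called \emph{pretty-good-response} condition, the algorithm's time-averaged strategy converges to the leader strategy maximally exploiting the opponent. The remaining work is to show that any $QR$ depending only on the ordering of the follower's expected utilities satisfies this condition. This fits the pattern announced in the preceding paragraph, where ``simple'' (finite-valued) quantal functions are claimed to be pretty-good-responses.

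First I would establish that such a $QR$ is simple, i.e., attains only finitely many values on $\Sigma_\pr$. The ordering of the numbers $\{u_\ps(\sigma_\pr, a^k)\}_{a^k \in A_\ps}$ is determined entirely by the signs of the pairwise differences $u_\ps(\sigma_\pr, a^j) - u_\ps(\sigma_\pr, a^k)$. Since each such difference is affine in $\sigma_\pr$, its zero set is a hyperplane, and these hyperplanes partition $\Sigma_\pr$ into at most $|A_\ps|!$ polyhedral cells on each of which the ordering, and hence $QR(\sigma_\pr)$, is constant. Ties along the boundaries can be resolved by any consistent tie-breaking rule, yielding a globally well-defined $QR$ with finite image.

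Second I would verify that a simple $QR$ is a pretty-good-response. On each cell $R$ with constant response $\rho_R := QR(\sigma_\pr)$, the leader's objective $u_\pr(\sigma_\pr, \rho_R)$ is an affine function of $\sigma_\pr$, so globally $u_\pr(\sigma_\pr, QR(\sigma_\pr))$ is piecewise-affine with finitely many pieces. During RM-QR, the follower plays only from a finite set of strategies, so the standard Blackwell regret-matching bound yields $O(1/\sqrt{T})$ external regret for the leader against the induced sequence of responses. Combined with the zero-sum assumption, this gives the pretty-good-response condition: asymptotically, the average leader strategy is a best response to the $QR$ response it induces. Applying the main convergence theorem of \cite{davis2014using}, the average strategy of RM-QR converges to $\argmax_{\sigma_\pr \in \Sigma_\pr} u_\pr(\sigma_\pr, QR(\sigma_\pr))$, which by definition is a QSE.

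The main obstacle is the boundary behavior between cells where $QR$ is discontinuous. RM-QR iterates may cross such boundaries repeatedly, and the analysis must show (i) the average strategy still converges rather than oscillating near a discontinuity, and (ii) its limit is a global maximizer of the piecewise-affine objective, not a stationary point of a single piece. This is where the finite-valuedness of $QR$ becomes essential, since it reduces the global optimization to comparing the maxima of finitely many linear programs and lets the CFR-$f$ analysis of \cite{davis2014using} go through essentially unchanged.
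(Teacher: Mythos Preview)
Your overall plan---verify the pretty-good-response (PGR) property of the ordering-based $QR$ and then invoke the CFR-$f$ convergence theorem---is exactly the paper's route. However, your execution has a genuine gap: you never actually verify the PGR condition. Recall that PGR is a property of the \emph{follower's} response function, namely
\[
u_\ps(\sigma_\pr, QR(\sigma_\pr)) \;\geq\; u_\ps(\sigma_\pr, QR(\sigma_\pr')) \qquad\forall\,\sigma_\pr,\sigma_\pr'\in\Sigma_\pr.
\]
Your second paragraph instead discusses the \emph{leader's} piecewise-affine objective and the leader's external regret, and then states ``this gives the pretty-good-response condition: asymptotically, the average leader strategy is a best response to the $QR$ response it induces.'' That sentence is not the PGR condition at all; it is (at best) convergence to QNE, which the paper explicitly distinguishes from QSE. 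Low leader regret is an \emph{input} to the CFR-$f$ theorem, not a substitute for verifying its hypothesis on the follower side. As written, you have conflated the hypothesis you must check with the conclusion you want.

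The missing argument---and what the paper actually does---is a short combinatorial one about the follower. Fix $\sigma_\pr$ and let $i_1,\dots,i_n$ be the induced descending order of $u_\ps(\sigma_\pr,\cdot)$. Any ordering-based $QR$ evaluated at some other $\sigma_\pr'$ produces a probability vector whose descending entries are the \emph{same multiset} as those of $QR(\sigma_\pr)$, just assigned according to a (possibly) different permutation. By the rearrangement inequality (or the term-by-term comparison the paper writes out), pairing the largest probabilities with the largest utilities---which is precisely what $QR(\sigma_\pr)$ does under $\sigma_\pr$---maximizes $\sum_k u_\ps(\sigma_\pr,a^{i_k})\,QR(\cdot,a^{i_k})$. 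Hence $u_\ps(\sigma_\pr,QR(\sigma_\pr))\geq u_\ps(\sigma_\pr,QR(\sigma_\pr'))$, i.e., PGR holds, and the CFR-$f$ theorem then yields convergence to QSE directly. Your hyperplane-arrangement finiteness step is correct but unnecessary for this, and your closing worries about boundary oscillation evaporate once PGR is established, since the CFR-$f$ guarantee already handles the global comparison.
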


An example of a simple quantal function depending only on the ordering of expected utilities is, e.g., a function assigning probability $0.5$ to the actions with the highest expected utility, probability $0.3$ to the action with the second-highest utility and probabilities $0.2/(\mid A_2\mid -2)$ to all remaining actions. Note that the class of quantal functions satisfying the conditions of pretty-good-responses still takes into account the strategy of the opponent (i.e., the responses are not static), but it is limited. In general, quantal functions do not satisfy the condition of pretty-good-responses.

\begin{proposition}
Let $QR$ be a canonical quantal function with a strictly monotonically increasing generator $q$. Then $QR$ is not a pretty-good-response.
\label{prop:notPGR}
\end{proposition}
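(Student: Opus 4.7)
The plan is to establish that the continuous dependence of a canonical quantal response on the numerical values of expected utilities contradicts the ordering-invariance demanded by the pretty-good-response condition illustrated just above Proposition~\ref{prop:notPGR}.

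First, I would make explicit the characterization of pretty-good-response used throughout this section: a response function is a pretty-good-response only if it assigns identical distributions over $A_\ps$ to any two leader strategies that induce the same ranking of follower actions by expected utility. This reading is consistent with the example immediately preceding the proposition (probabilities $0.5/0.3/\ldots$ determined solely by rank) and with the statement that only simple, finite-valued quantal functions satisfy the condition. Both features are inherently ordering-based, since a function continuously depending on utility values would generically attain uncountably many output distributions.

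Second, I would construct a direct counterexample. Take any NFG with at least two follower actions whose expected utilities under some interior leader strategy $\sigma_\pr$ are distinct. By continuity of $\sigma_\pr \mapsto u_\ps(\sigma_\pr, a^k)$, a sufficiently small perturbation yields $\sigma_\pr' \neq \sigma_\pr$ with the same ordering over follower actions but at least one strictly changed utility value $u_\ps(\sigma_\pr', a^k) \neq u_\ps(\sigma_\pr, a^k)$. Applying the canonical quantal formula
\[
QR(\sigma_\pr, a^k) = \frac{q(u_\ps(\sigma_\pr, a^k))}{\sum_{a^i \in A_\ps} q(u_\ps(\sigma_\pr, a^i))},
\]
and invoking strict monotonicity (together with positivity) of $q$, the ratios differ between $\sigma_\pr$ and $\sigma_\pr'$ on at least one action, so $QR(\sigma_\pr) \neq QR(\sigma_\pr')$. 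Two strategies with identical orderings therefore map to distinct responses, violating the ordering-invariance property; hence $QR$ is not a pretty-good-response.

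The main obstacle is pinning down precisely which characterization of pretty-good-response is in force, since the paper cites it without restating the formal definition from Davis~et~al. Once the ordering-invariance reading is adopted — the only reading compatible with both the preceding example and the phrase ``attaining only a finite number of values'' — the argument reduces essentially to a one-line computation from strict monotonicity of $q$. The only remaining care is in verifying that the perturbation $\sigma_\pr \to \sigma_\pr'$ preserves the ranking of the follower's utilities, which holds for any sufficiently small perturbation by continuity.
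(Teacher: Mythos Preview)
The gap is in your first step: you adopt ``ordering-invariance'' as the operative characterization of pretty-good-response, but that is not the definition the paper uses. The definition, stated explicitly in the appendix proof of Proposition~\ref{prop:RMQRconv}, is the utility inequality
\[
u_\ps(\sigma_\pr, f(\sigma_\pr)) \;\geq\; u_\ps(\sigma_\pr, f(\sigma_\pr')) \qquad \forall\, \sigma_\pr, \sigma_\pr'\in\Sigma_\pr.
\]
Proposition~\ref{prop:RMQRconv} shows that dependence only on the ordering is \emph{sufficient} for this inequality; it never claims necessity, and with three or more follower actions one can have $f(\sigma_\pr)\neq f(\sigma_\pr')$ for two leader strategies inducing the same ranking while both directions of the inequality above still hold (take e.g.\ $u_\ps(\sigma_\pr,\cdot)=(3,2,1)$, $u_\ps(\sigma_\pr',\cdot)=(10,2,1)$, and $f(\sigma_\pr)-f(\sigma_\pr')=(-0.1,0.2,-0.1)$). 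Your argument stops at $QR(\sigma_\pr)\neq QR(\sigma_\pr')$, which by itself says nothing about which of $u_\ps(\sigma_\pr, QR(\sigma_\pr))$ and $u_\ps(\sigma_\pr, QR(\sigma_\pr'))$ is larger.

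The paper instead exhibits a concrete $2\times 2$ game (follower payoffs $(b,a)$ under leader action $X$ and $(c,a)$ under $Y$, with $a<b<c$) and checks the inequality fails directly: strict monotonicity of $q$ gives $QR(Y,A)>QR(X,A)$, and then
\[
u_\ps(X, QR(Y)) - u_\ps(X, QR(X)) = (b-a)\bigl(QR(Y,A)-QR(X,A)\bigr) > 0,
\]
which is precisely the required violation. Your perturbation idea can be salvaged, but only after (i) restricting to two follower actions---where two distinct distributions with the same ranking \emph{do} force a violation of the utility inequality---and (ii) ensuring the perturbation actually changes the ratios: for the logit generator a uniform additive shift of all expected utilities leaves $QR$ invariant, so ``at least one strictly changed utility value'' is not sufficient without controlling the game structure.
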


\section{Experimental Evaluation}
\label{eval}
The experimental evaluation aims to compare solutions of our proposed algorithm RQR with QNE strategies computed by RM-QR for NFGs and CFR-QR for EFGs. As baselines, we use (i) Nash equilibrium (NASH) strategies, (ii) a best convex combination of NASH and QNE denoted COMB, and (iii) an approximation of QSE computed by gradient ascent (GA), initialized by NASH. We use regret matching+ in the regret-based algorithms.
We focus mainly on zero-sum games, because they allow for a more straightforward interpretation of the trade-offs between gain and exploitability. Still, we also provide results on general-sum NFGs. Finally, we show that the performance of RQR is stable over different rationality values and analyze the EFG algorithms more closely on well-known Leduc Hold'em game. The experimental setup and all the domains are described in the appendix. The implementation is available \href{https://gitlab.fel.cvut.cz/milecdav/aaai_qne_code.git}{\color{ntublue}here}.

\subsection{Scalability}
\begin{figure}
\centering
\includegraphics[width=\linewidth]{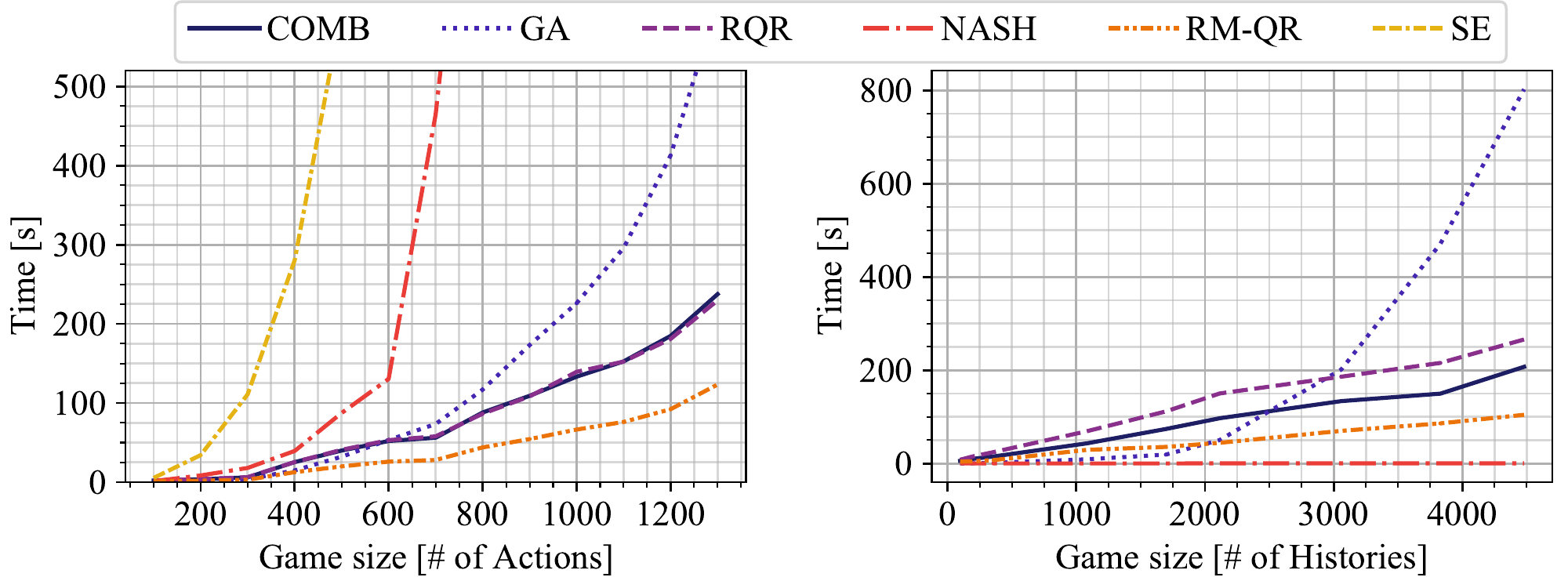}
\caption{Running time comparison of COMB, GA, RQR, NASH, SE and QNE on (left) square general-sum NFGs and (right) zero-sum EFGs. 
}
\label{scaling_nfg}
\end{figure}
The first experiment shows the difference in runtimes of GA and regret-minimization approaches. In NFGs, we use random square zero-sum games as an evaluation domain, and the runtimes are averaged over 1000 games per game size. In EFGs, the random generation procedure does not guarantee games with the same number of histories, so we cluster games with a similar size together, and report runtimes averaged over the clusters. The results on the right of Figure~\ref{scaling_nfg} show that regret minimization approaches scale significantly better -- the tendency is very similar in both NFGs and EFGs, and we show the results for NFGs in the appendix.

We report scalability in general-sum games on the left in Figure~\ref{scaling_nfg}. We generated 100 games of Grab the Dollar, Majority Voting, Travelers Dilemma, and War of Attrition with an increasing number of actions for both players and also 100 random general-sum NFGs of the same size. Detailed game description is in the appendix. In the rest of the experiments, we use sets of 1000 games with 100 actions for each class. We use a MILP formulation to compute the NE \cite{sandholm2005mixed} and solve for SE using multiple linear programs \cite{conitzer2006computing}. The performance of GA against CFR-based algorithm is similar to the zero-sum case, and the only difference is in NE and SE, which are even less scalable than GA.

\subsection{Gain comparison}
Now we turn to a comparison of gains of all algorithms in NFGs and EFGs. We report averages with standard errors for zero-sum games in Figure~\ref{zs_nfg_perf} and general-sum games in Figure~\ref{nfg_perf} (left). We use the NE strategy as a baseline, but as different NE strategies can achieve different gains against the subrational opponent, we try to select the best NE strategy. To achieve this, we first compute a feasible NE. Then we run gradient ascent constrained to the set of NE, optimizing the expected value. We aim to show that RQR performs even better than an optimized NE. Moreover, also COMB strategies outperform the best NE, despite COMB using the (possibly suboptimal) NE strategy computed by CFR.

The results show that GA for QSE is the best approach in terms of gain in zero-sum and general-sum games if we ignore scalability issues. The scalable heuristic approaches also achieve significantly higher gain than both the NE baseline and competing QNE in both zero-sum and general-sum games. On top of that, we show that in general-sum games, in all games except one, the heuristic approaches perform as well as or better than SE.  This indicates that they are useful in practice even in general-sum settings.

\begin{figure}
\includegraphics[width=0.48\linewidth]{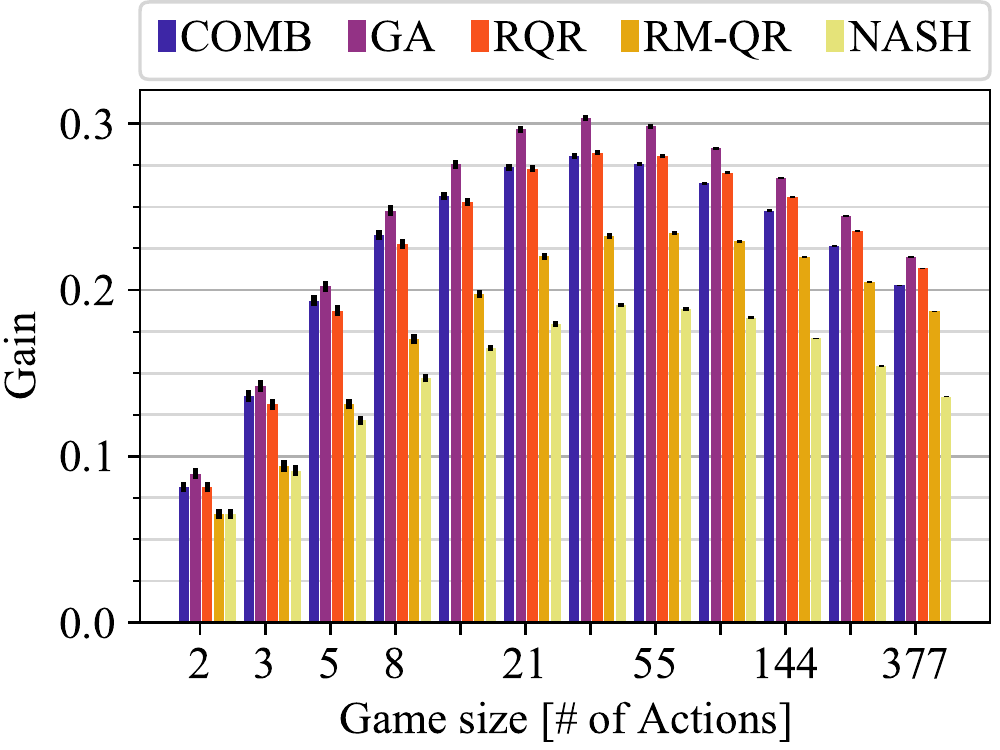}
\hspace{0.02\linewidth}
\includegraphics[width=0.48\linewidth]{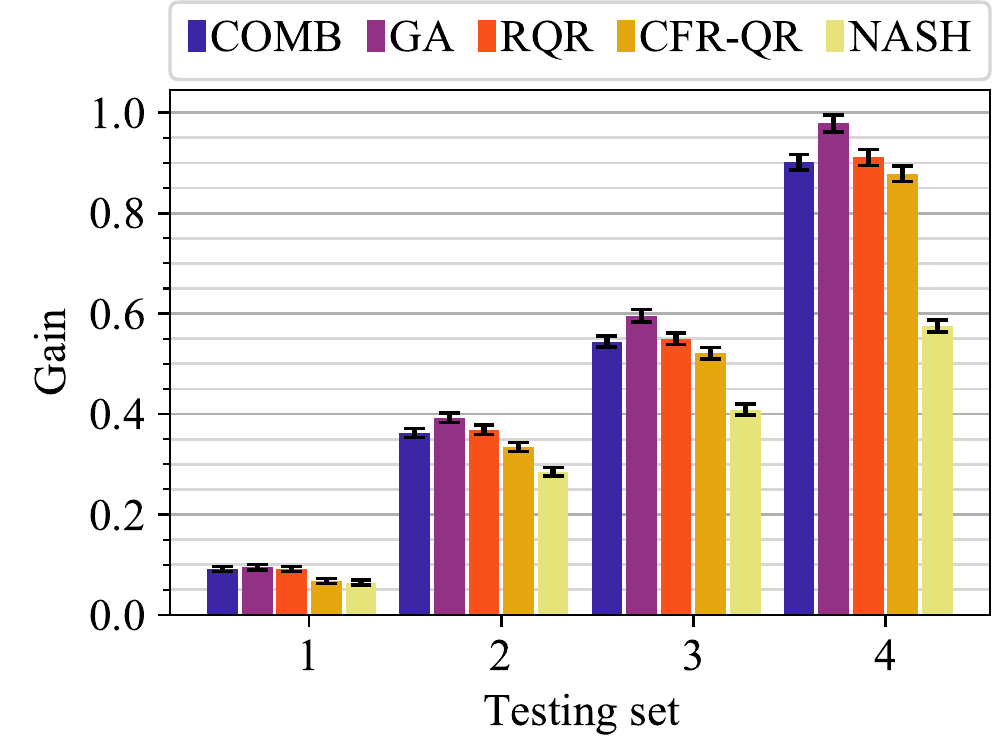}
\caption{Gain comparison of GA, Nash(SE), QNE, RQR and COMB in (left) random square zero-sum NFGs, and (right) random zero-sum EFGs.}
\label{zs_nfg_perf}
\end{figure}

\begin{figure}
\includegraphics[width=0.48\linewidth]{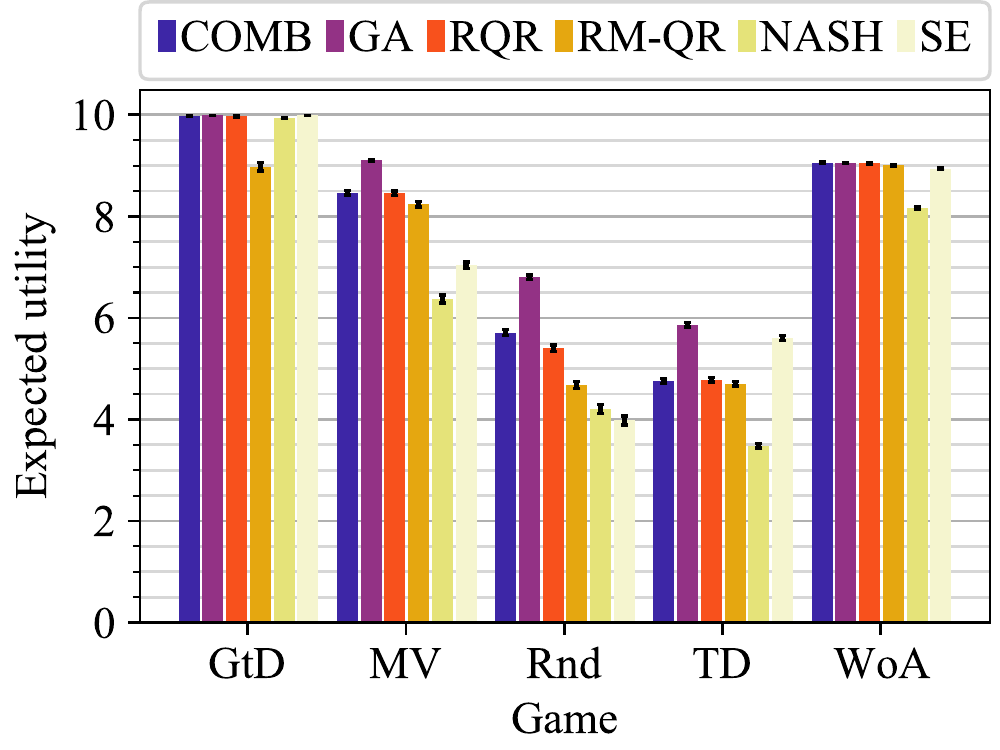}
\hspace{0.02\linewidth}
\includegraphics[width=0.48\linewidth]{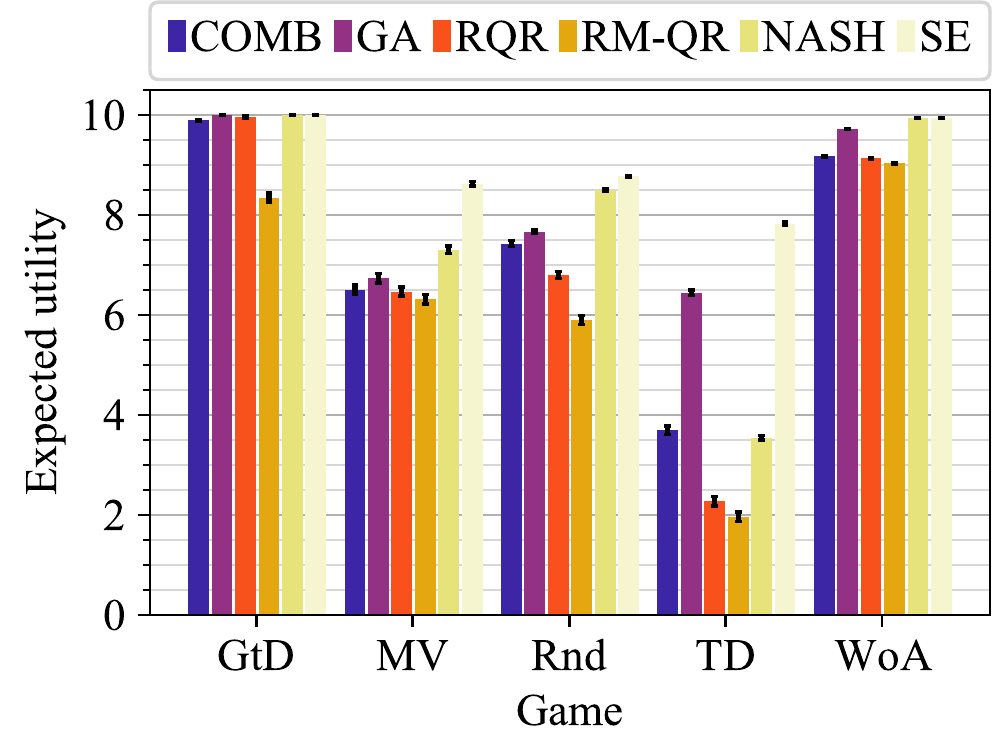}
\caption{Gain and robustness comparison of GA, Nash, Strong Stackelberg (SE), QNE, COMB and RQR in general sum NFGs -- the expected utility (left) against LQR and (right) against BR that maximizes leader's utility.}
\label{nfg_perf}
\end{figure}

\subsection{Robustness comparison}
In this work, we are concerned primarily with increasing gain. However, the higher gain might come at the expense of robustness--the quality of strategies might degrade if the model of the opponent is incorrect. Therefore, we study also (i) the exploitability of the solutions in zero-sum games and (ii) expected utility against the best response that breaks ties in our favor in general-sum games. Both correspond to performance against a perfectly rational selfish opponent.

First, we report the mean exploitability in zero-sum games in Figure~\ref{efg_perf}.Exploitability of NE is zero, so it is not included. We show that QNE is highly exploitable in both NFGs and EFGs. COMB and GA perform similarly, and RQR has significantly lower exploitability compared to other modeling approaches. Second, we depict the results in general-sum games on the right in Figure~\ref{nfg_perf}. By definition, SE is the optimal strategy and provides an upper bound on achievable value. Unlike in zero-sum games, GA outperforms CFR-based approaches even against the rational opponent. Our heuristic approaches are not as good as entirely rational solution concepts, but they always perform better than QNE. 

\begin{figure}
\includegraphics[width=0.48\linewidth]{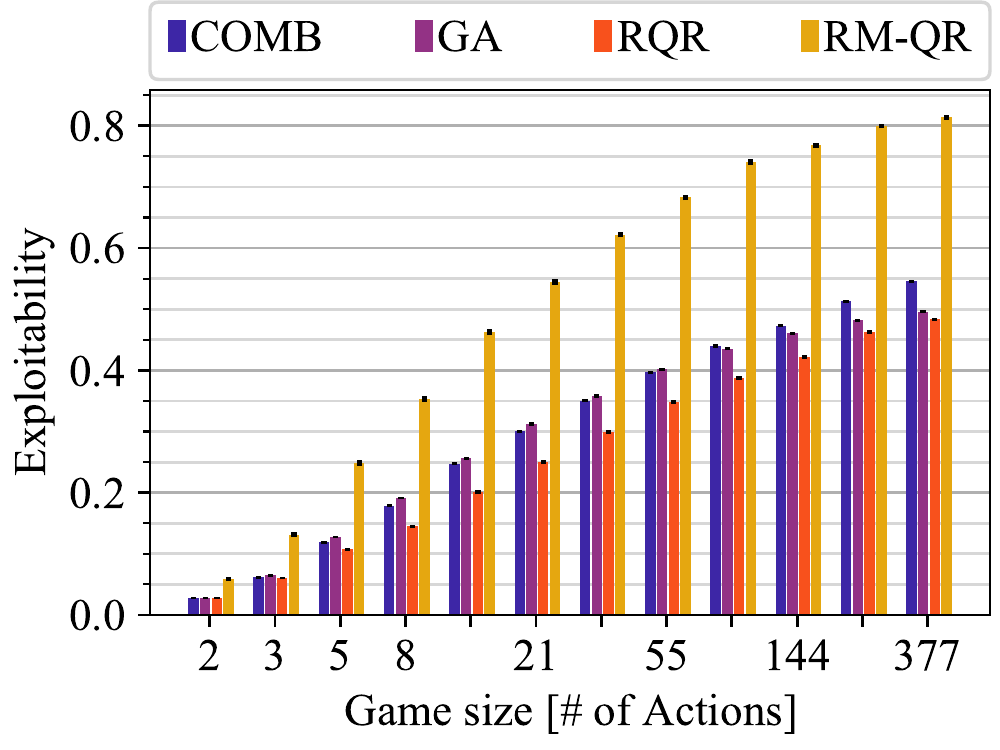}
\hspace{0.02\linewidth}
\includegraphics[width=0.48\linewidth]{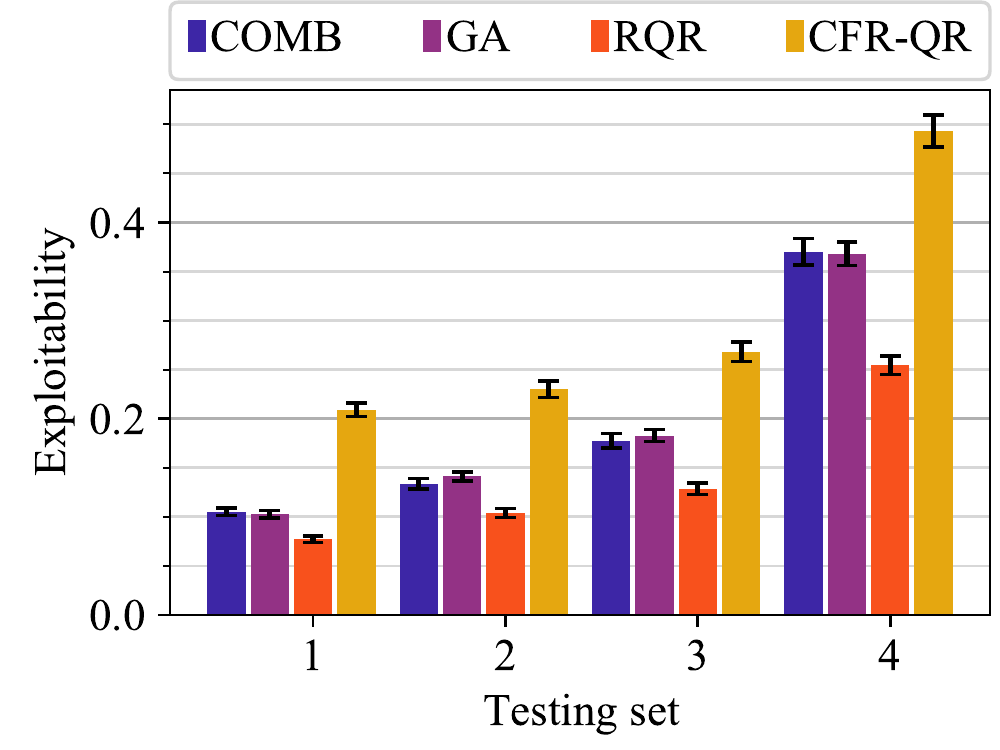}
\caption{Robustness comparison of GA, QNE, COMB and RQR in (left) random square zero-sum NFGs, and (right) random zero-sum EFGs ($\mathcal{E}(Nash)=0$).}
\label{efg_perf}
\end{figure}

\subsection{Different rationality}
In the fourth experiment, we access the algorithms' performance against opponents with varying rationality parameter $\lambda$ in the logit function. For $\lambda\in\{0,\dots100\}$ we report the expected utility on the left in Figure~\ref{leduc_perf}. For smaller values of $\lambda$ (i.e., lower rationality), RQR performs similarly to GA and QNE, but it achieves lower exploitability. As rationality increases, the gain of RQR is found between GA and QNE, while having the lowest exploitability. For all values of $\lambda$, both QNE and RQR report higher gain than NASH. We do not include COMB in the figure for the sake of better readability as it achieves similar results to RQR.

\subsection{Standard EFG benchmarks}
\paragraph{Poker.}
Poker is a standard evaluation domain, and continual resolving was demonstrated to perform extremely well on it \cite{moravvcik2017deepstack}. We tested our approaches on two poker variants: one-card poker and Leduc Hold'em. We used $\lambda = 2$ because for $\lambda = 1$, QNE is equal to QSE. We report the values achieved in Leduc Hold'em on the right in Figure~\ref{leduc_perf}. The horizontal lines correspond to NE and GA strategies, as they do not depend on $p$. The heuristic strategies are reported for different $p$ values. The leftmost point corresponds to the CFR-BR strategy and rightmost to the QNE strategy. The experiment shows that RQR performs very well for poker games as it gets close to the GA while running significantly faster. Furthermore, the strategy computed by RQR is much less exploitable consistently throughout various $\lambda$ values. This suggests that the restricted response can be successfully applied not only against strategies independent of the opponent as in \cite{johanson2008computing}, but also against adapting opponents. We observe similar performance also in the one-card poker and report the results in the appendix.

\paragraph{Goofspiel 7.} 
We demonstrate our approach on Goofspiel 7, a game with almost 100 million histories to show a practical scalability. While CFR-QR, RQR, and CFR were able to compute a strategy, the games of this size are beyond the computing abilities of GA and memory requirements of COMB. CFR-QR has exploitability 4.045 and gains 2.357, RQR has exploitability 3.849 and gains 2.412, and CFR gains 1.191 with exploitability 0.115. RQR hence performs the best in terms of gain and outperforms CFR-QR in exploitability. All algorithms used 1000 iterations.


\begin{figure}
\includegraphics[width=0.48\linewidth]{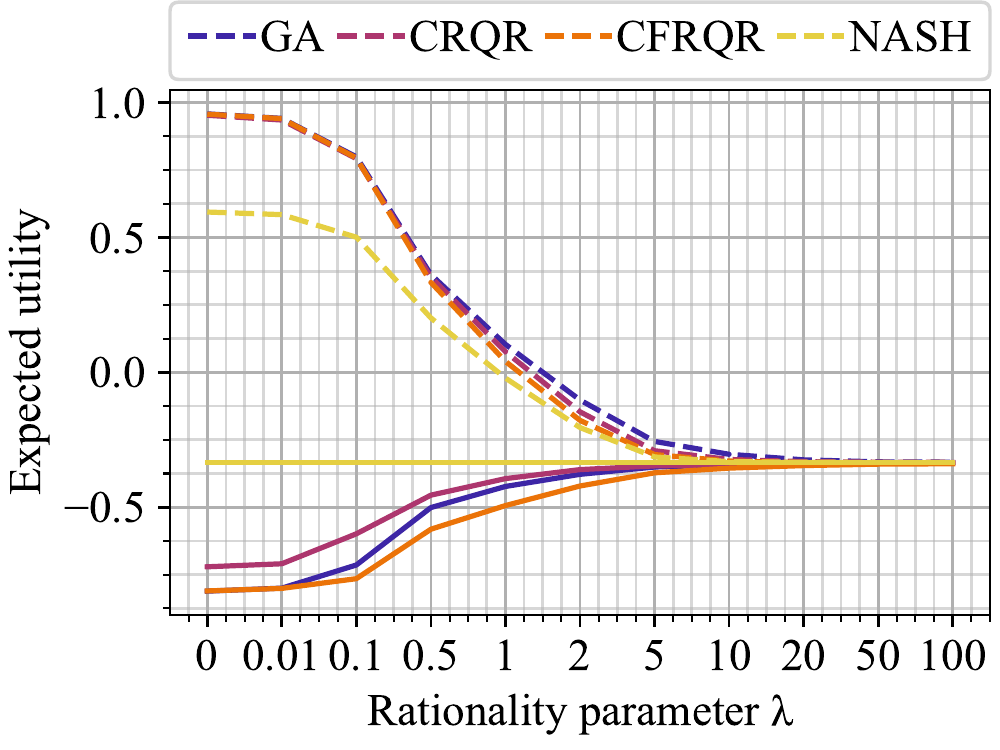}
\hspace{0.02\linewidth}
\includegraphics[width=0.48\linewidth]{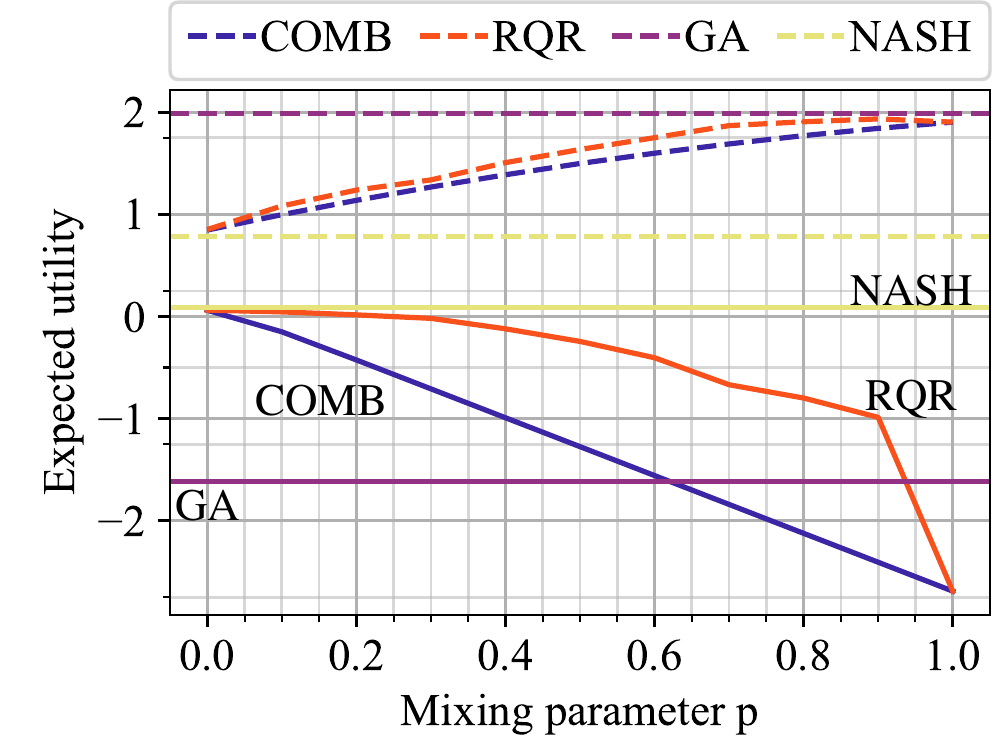}
\caption{(Left) Mean expected utility against CLQR (dashed) and BR (solid) over 100 games from set 2 of EFGs with different value of the rationality parameter $\lambda$. (Right) Expected utility of different algorithms against CLQR (dashed) and BR (solid) in Leduc Hold'em. $p$ is fixed for both regret minimization approaches and QNE is the value of COMB or RQR with $p = 1$. }
\label{leduc_perf}
\vspace{-5pt}
\end{figure}

\subsection{Summary of the results}
In the experiments, we have shown three main points. (1) GA approach does not scale even to moderate games, making regret minimization approaches much better suited to larger games. (2) In both normal-form and extensive-form games, the RQR approach outperforms NASH and QNE baseline in terms of gain and outperforms QNE in terms of exploitability, making it currently the best approach against LQR opponents in large games. (3) Our algorithms perform better than the baselines, even with different rationality values, and can be successfully used even in general games. Visual comparison of the algorithms in zero-sum games is provided in the following table. Scalability denotes how well the algorithm scales to larger games. The marks range from three minuses as the worst to three pluses as the best with NE being the 0 baselines.

\begin{center}
\small{
\begin{tabular}{@{~}l|ccccc@{~}}
                    & COMB & \tikzmark{top left 1}RQR  & QNE   &  NE   &  GA \\ \hline
  {\bf Scalability} & -    &  0  &   0      &  0      & -~-~-~\\
  Gain              & ++    &   ++  &    +       &  0     & +++ \\
  Exploitability    & -~-   &  - \tikzmark{bottom right 1}  &     -~-~-    &  0     &  -~-   
\end{tabular}
}
 \DrawBox[ultra thick, rqrcolor]{top left 1}{bottom right 1}
\end{center}
\vspace{-5pt}
\section{Conclusion}

Bounded rationality models are crucial for applications that involve human decision-makers.
Most previous results on bounded rationality consider games among humans, where all players' rationality is bounded. However, artificial intelligence applications in real-world problems pose a novel challenge of computing optimal strategies for an entirely rational system interacting with bounded-rational humans. We call this optimal strategy Quantal Stackelberg Equilibrium (QSE) and show that natural adaptations of existing algorithms do not lead to QSE, but rather to a different solution we call Quantal Nash Equilibrium (QNE). 
As we observe, there is a trade-off between computability and solution quality. QSE provides better strategies, but it is computationally hard and does not scale to large domains. QNE scales significantly better, but it typically achieves lower utility than QSE and might be even worse than the worst Nash equilibrium. Therefore, we propose a variant of counterfactual regret minimization which, based on our experimental evaluation, 
scales to large games, and computes strategies that outperform QNE against both the quantal response opponent and the perfectly rational opponent.



\section*{Acknowledgement}
This research is supported by Czech Science Foundation (grant no. 18-27483Y) and the SIMTech-NTU Joint Laboratory on Complex Systems. Bo An is partially supported by Singtel Cognitive and Artificial Intelligence Lab for Enterprises (SCALE@NTU), which is a collaboration between Singapore Telecommunications Limited (Singtel) and Nanyang Technological University (NTU) that is funded by the Singapore Government through the Industry Alignment Fund -- Industry Collaboration Projects Grant. Computation resources were provided by the OP VVV MEYS funded project CZ.02.1.01/0.0/0.0/16 019/0000765 Research Center for Informatics.

\bibliography{bibliography}

\cleardoublepage
\appendix

\section{Proofs}

\setcounter{theorem}{0}

\subsection{Proof of Theorem~\ref{thm:one}}
\begin{lemma}
Let ${\cal A} = \{a_1,a_2,\dots,a_n\}, a_i\in\real$, $a_1 = \max({\cal A}) > 0$. Then it holds that 
\begin{equation}
    \max({\cal A}) - \softmax_\lambda({\cal A}) \leq \frac{W(1/e)}{\lambda} + \frac{n-2}{\lambda e}.
\end{equation}
\end{lemma}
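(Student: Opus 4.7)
The plan is to convert the quantity $\max(\mathcal{A})-\softmax_\lambda(\mathcal{A})$ into a clean ratio involving the gaps $d_i := a_1 - a_i$, and then split it into a single ``balanced'' term that is bounded using the Lambert $W$ function, plus a sum of ``loose'' terms each bounded by the standard maximum of $x e^{-x}$.

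First I would fix the softmax convention used in this paper: $\softmax_\lambda(\mathcal{A}) = \sum_i a_i \, e^{\lambda a_i}/\sum_i e^{\lambda a_i}$ (the weighted-average form, which is consistent with expected utility under a logit quantal response and is the only form for which the claimed inequality has the right sign). Writing $d_i = a_1 - a_i \ge 0$ and dividing numerator and denominator by $e^{\lambda a_1}$ yields
\[
\max(\mathcal{A}) - \softmax_\lambda(\mathcal{A}) \;=\; \frac{\sum_{k\ge 2} d_k\, e^{-\lambda d_k}}{1+\sum_{i\ge 2} e^{-\lambda d_i}}.
\]
Substituting $x_k = \lambda d_k \ge 0$ pulls out a factor $1/\lambda$, so that it suffices to bound
\[
F(x_2,\dots,x_n) \;=\; \frac{\sum_{k\ge 2} x_k\, e^{-x_k}}{1+\sum_{i\ge 2} e^{-x_i}}
\]
by $W(1/e) + (n-2)/e$.

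Next I would split the numerator by pulling out the $k=2$ term and using two different denominator lower bounds:
\[
F \;\le\; \frac{x_2 e^{-x_2}}{1+e^{-x_2}} \;+\; \sum_{k\ge 3} x_k e^{-x_k},
\]
where for the first term I keep only $e^{-x_2}$ in the denominator, and for each remaining term I use $1+\sum_{i\ge 2}e^{-x_i}\ge 1$. The standard bound $x e^{-x} \le 1/e$ (attained at $x=1$) takes care of every summand in the second piece, contributing at most $(n-2)/e$.

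The main (and only interesting) obstacle is bounding $g(x) := \frac{x e^{-x}}{1+e^{-x}} = \frac{x}{e^x + 1}$ by $W(1/e)$. I would differentiate to find the critical point: $g'(x)=0$ gives $e^x(x-1)=1$, i.e., $(x-1)e^{x-1}=1/e$, so the unique maximizer is $x^* = 1 + W(1/e)$. Using the defining identity $W(1/e)\, e^{W(1/e)} = 1/e$ to simplify $e^{x^*} = 1/W(1/e)$, one obtains $g(x^*) = \frac{1+W(1/e)}{1/W(1/e)+1} = W(1/e)$. Combining the two pieces and restoring the $1/\lambda$ factor yields the claimed inequality. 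A brief sanity check on $n=2$ (where the second sum is empty and only the Lambert term survives) confirms the bound is tight in that case.
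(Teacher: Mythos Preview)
Your proof is correct. The paper establishes the same bound by induction on $n$: the base case $n=2$ parametrizes $a_2=a_1x$ and optimizes over $x$, arriving (after more algebra) at exactly your function $g(x)=x/(e^x+1)$ and the same Lambert-$W$ maximizer $x^*=1+W(1/e)$; the induction step then bounds the contribution of one additional element by $(a_1-a_{n+1})e^{\lambda a_{n+1}}/e^{\lambda a_1}$, whose maximum over $a_{n+1}\le a_1$ is precisely the $1/(\lambda e)$ you obtain from $xe^{-x}\le 1/e$. Your route differs in that you rewrite the gap directly in terms of the differences $d_i=a_1-a_i$, factor out $1/\lambda$ via $x_k=\lambda d_k$, and split the resulting ratio once into a single ``balanced'' term plus $n-2$ ``loose'' terms, rather than peeling off elements one at a time. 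This buys a cleaner argument: the paper's parametrization $a_i=a_1x$ is the only place the hypothesis $a_1>0$ is actually used (to make $x$ range over all of $(-\infty,1]$), whereas your gap-based formulation needs only $a_1=\max(\mathcal{A})$; and working with $x/(e^x+1)$ from the outset avoids carrying $a_1$ and $\lambda$ through the optimization. The two proofs are otherwise equivalent in strength and produce the identical constant, with the $n=2$ case tight in both.
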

\begin{proof}
We proceed by induction on the size of the set $\cal A$.

\noindent\underline{Base case:} Let $n=2$. Because $a_2 \leq a_1$, any $a_2$ can be written as $a_1x, x \leq 1$. For a given $\lambda$, the difference between $\max$ and $\softmax$ can be written as
\begin{equation*}
    d(x) = a_1 - \frac{a_1e^{\lambda a_1} + a_1xe^{\lambda a_1x}}{e^{\lambda a_1} + e^{\lambda a_1x}}.
\end{equation*}
To find a maximum of this function, we differentiate it by $x$, which yields
\begin{equation*}
    d'(x) = -\frac{a_1 e^{\lambda a_1x} (e^{\lambda a_1} (\lambda a_1 (x - 1) + 1) + e^{\lambda a_1 x})}{(e^{\lambda a_1 x} + e^{\lambda a_1})^2}.
\end{equation*}
For $a_1>0$, the function $d'$ has a root 
\begin{equation*}
    r = \frac{a_1\lambda - W(1/e) -1}{a_1\lambda},
\end{equation*}
where $W$ is the Lambert function. The root is unique, because the inner function $e^{\lambda a_1} (\lambda a_1 (x - 1) + 1) + e^{\lambda a_1 x}$ is increasing as its derivative is positive for all $x \leq 1$. It is a maximum of $d$, because $d''(r)<0$. By plugging the root into the function $d$, we obtain the upper bound on the distance between $\max$ and $\softmax$:
\begin{equation*}
    d(r) = \frac{W(1/e)}{\lambda},
\end{equation*}
which is independent on $a_1, a_2$.

\noindent\underline{Induction step:} For a given $|{\cal A}|=n$, assume $\max({\cal A}) - \softmax_\lambda({\cal A}) \leq C$. Consider a new $a_{n+1}\leq a_1$. Again, we set $a_{n+1}=a_1x, x\leq 1$. For a given $\lambda$, the difference between $\max$ and $\softmax$ can be written as
\begin{equation*}
    a_1 - \frac{\sum_{i=1}^n a_ie^{\lambda a_i} + a_1xe^{\lambda a_1x}}{\sum_{i=1}^n e^{\lambda a_i} + e^{\lambda a_1x}} \leq C + \frac{(a_1-a_1x)e^{\lambda a_1x}}{e^{\lambda a_1}},
\end{equation*}
because the $exp$ function is strictly greater than zero. To find a maximum of the second term, we differentiate it by $x$:
\begin{equation*}
    \left(\frac{(a_1-a_1x)e^{\lambda a_1x}}{e^{\lambda a_1}}\right)' = a_1^2 \lambda (1 - x) e^{a_1 \lambda (x - 1)} - a_1 e^{a_1 \lambda (x - 1)}.
\end{equation*}
As in the base case, for $a_1>0$ the derivative has a root
\begin{equation*}
    r = 1 - \frac{1}{\lambda a_1}, \qquad \frac{(a_1-a_1r)e^{\lambda a_1r}}{e^{\lambda a_1}} = \frac{1}{\lambda e}.
\end{equation*}
The root is unique, because the derivative is positive increasing on $(-\infty, 1-2/a_1\lambda)$ and decreasing on $(1-2/a_1\lambda,1]$, as differentiating it for the second time reveals. Therefore, we obtain the upper bound
\begin{equation*}
    \max({\cal A}\cup a_{n+1}) - \softmax_\lambda({\cal A}\cup a_{n+1}) \leq C + \frac{1}{\lambda e}.
\end{equation*}
The result follows from the induction. Note that the upper bound goes to zero as $\lambda$ approaches infinity.
\end{proof}

\begin{theorem}
Computing a QNE strategy profile in two-player NFGs is a PPAD-hard problem.
\end{theorem}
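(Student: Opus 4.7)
The plan is to reduce from the PPAD-hard problem of computing an $\epsilon$-Nash equilibrium in two-player NFGs~\cite{daskalakis2009complexity}. Given an NFG $G$ together with a tolerance $\epsilon > 0$, I would pair $G$ with a logit quantal response function whose rationality parameter $\lambda = \lambda(\epsilon, G)$ is chosen large enough that any QNE profile of the resulting instance is automatically an $\epsilon$-NE of $G$. The reduction outputs the same game $G$, transferring PPAD-hardness through the choice of $\lambda$.

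The central technical tool is the auxiliary Lemma proved above, which bounds
\begin{equation*}
\max(\mathcal{A}) - \softmax_\lambda(\mathcal{A}) \leq \frac{W(1/e)}{\lambda} + \frac{n-2}{\lambda e}
\end{equation*}
for any finite set $\mathcal{A}$ of reals with positive maximum. Applied with $\mathcal{A} = \{u_\ps(\sigma_\pr^{QNE}, a) : a \in A_\ps\}$, this bounds the follower's regret from playing $LQR(\sigma_\pr^{QNE})$ rather than a best response by $O(|A_\ps|/\lambda)$. In a QNE, the leader is by definition an exact best response to $LQR(\sigma_\pr^{QNE})$, so her own deviation gain is zero. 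Setting $\lambda$ on the order of $|A_\ps|/\epsilon$ therefore guarantees that both players' deviation gains are at most $\epsilon$, so the QNE profile is an $\epsilon$-NE of $G$. Any oracle for QNE can thus be queried once to yield an $\epsilon$-NE, establishing PPAD-hardness.

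The main obstacle I expect is verifying that the reduction runs in polynomial time and that the Lemma's hypotheses apply. Since $\lambda = O(|A_\ps|/\epsilon)$ and $\epsilon$ is a rational input of polynomial bit-length, $\lambda$ itself has polynomial bit-length, and constructing the QNE instance from the $\epsilon$-NE instance is trivial. Two subtleties remain: first, the Lemma assumes $\max(\mathcal{A}) > 0$, so if the follower's expected utilities could be nonpositive, I would translate all of her payoffs by a sufficiently large positive constant before the reduction -- such a shift leaves $BR$, $LQR$, and the set of $\epsilon$-NE unchanged. Second, one must ensure that the hardness reduction is parameterized so that $\lambda$ is part of the input to the QNE instance, which matches the theorem's intended setting where the quantal function (and hence $\lambda$) is supplied together with the game. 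With these points handled, the reduction is polynomial and the theorem follows.
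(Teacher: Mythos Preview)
Your proposal is correct and follows essentially the same route as the paper: reduce from $\epsilon$-NE, use the $\max-\softmax$ lemma to show that the follower's LQR is an $\epsilon$-best response for $\lambda$ of order $|A_\ps|/\epsilon$, and note that the leader is an exact best response by definition of QNE. Your treatment is in fact slightly more careful than the paper's on two points---you explicitly handle the positivity hypothesis of the Lemma via an additive shift (the paper simply assumes strictly positive utilities from the outset) and you verify that $\lambda$ has polynomial bit-length---but the underlying argument is the same.
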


\begin{proof}
Let $\tilde{G}$ be a 2-player NFG with strictly positive utilities, in which one of the players has $n$ actions to play. Computing an $\epsilon$-NASH in $\tilde{G}$ is PPAD-complete~\cite{daskalakis2009complexity}. We show that computing QNE is PPAD-hard by reducing the problem of finding $\epsilon$-NASH in $\tilde{G}$ to a problem of computing a specific QNE in $\tilde{G}$.

We construct the reduced game as follows: let the player with $n$ actions be the subrational player and let $q$ be from a logit class, i.e., $q(x) = e^{\lambda x}$ for some $\lambda$. Assume that there exists $\lambda^*$, such that for each $\epsilon$ and each strategy $\sigma_\pr$ of the leader $u_\ps(\sigma_\pr, BR(\sigma_\pr)) - u_\ps(\sigma_\pr, QR(\sigma_\pr)) \leq \epsilon$. Because the leader plays fully rationally, his QNE strategy is a best response. By the definition of $\lambda^*$, the follower's QR is an $\epsilon$-best response. Therefore, by solving for QNE with $q(x) = e^{\lambda^* x}$, we find an $\epsilon$-NASH in $\tilde{G}$.

Each strategy $\sigma$ of a leader generates expected utilities for the follower, playing BR corresponds to $\max$, playing QR corresponds to $\softmax$. Because the game we reduce from has $n$ actions, there are $n$ expected utilities, we can hence use the lemma. Setting $\lambda^* = \frac{W(1/e)}{\epsilon}+\frac{n-2}{\lambda e}$ concludes the proof.
\end{proof}

\subsection{Proof of Proposition~\ref{prop:badqne}}
\begin{proposition}
 For any $LQR$ function. There exists a zero-sum normal-form game $G = (N,A,u)$ with a unique NE~-~$(\sigma_\pr^{NE},\sigma_\ps^{NE})$ and QNE - $(\sigma_\pr^{QNE},QR(\sigma_\pr^{QNE}))$ such that $u_\pr(\sigma_\pr^{NE},QR(\sigma_\pr^{NE})) > u_\pr(\sigma_\pr^{QNE},QR(\sigma_\pr^{QNE}))$.
\end{proposition}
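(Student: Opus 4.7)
The plan is to construct, for every logit rationality $\lambda > 0$, a $2 \times 3$ zero-sum NFG witnessing the claimed inequality, reducing the general case to a single base construction at $\lambda = 1$ by rescaling. As the base witness I would take the game with payoff matrix
\begin{equation*}
M \;=\; \begin{pmatrix} 1 & 1 - \delta & 0 \\ -1 & -1 - \delta & 2 \end{pmatrix},
\end{equation*}
where $\delta \in (0, 1)$ is a small fixed parameter (e.g.\ $\delta = 0.1$). A direct computation gives $u_\ps(B) - u_\ps(A) = \delta$ at every leader mixed strategy, so column B strictly dominates column A from the follower's viewpoint. Hence no NE places positive mass on A, and the reduced $2 \times 2$ game on $\{B, C\}$ has a unique mixed NE with $\sigma_\pr^{NE} = ((3+\delta)/4, (1-\delta)/4)$ and $\sigma_\ps^{NE} = (0, 1/2, 1/2)$.

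I would then analyze the two equilibrium utilities. At $\sigma_\pr^{NE}$ the follower's counterfactual utilities for columns A and B differ only by $\delta$, so the logit quantal response still assigns substantial probability to the dominated column A; because A pays the leader $+1$ in row~1, this follower ``mistake'' inflates $u_\pr(\sigma_\pr^{NE}, QR(\sigma_\pr^{NE}))$ strictly above the value of the reduced NE game. For the QNE, the leader's indifference condition $u_\pr(X, QR) = u_\pr(Y, QR)$ simplifies (after subtracting rows in $M$) to $QR(A) + QR(B) = QR(C) = 1/2$, which pins down a mixed QNE at some $\sigma_\pr^{QNE} = (p^{QNE}, 1 - p^{QNE})$ with $p^{QNE} > p^{NE}$. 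This shift concentrates QR mass onto column C at the expense of column A and yields a strictly smaller leader utility. I would verify $u_\pr(\sigma_\pr^{NE}, QR(\sigma_\pr^{NE})) > u_\pr(\sigma_\pr^{QNE}, QR(\sigma_\pr^{QNE}))$ by direct arithmetic at $\delta = 0.1$, and rule out pure-strategy QNEs by checking that at $p \in \{0, 1\}$ the rational best response to the induced QR flips to the opposite row.

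To cover every $\lambda > 0$ I would invoke the scale-invariance identity $QR_\lambda(\sigma \mid u) = QR_1(\sigma \mid \lambda u)$ satisfied by the logit function. Replacing $M$ by the rescaled matrix $M/\lambda$ and applying rationality $\lambda$ reproduces exactly the QR distributions, the NE, and the QNE of $M$ at rationality $1$, while multiplying every leader utility by the positive factor $1/\lambda$; the strict inequality is therefore preserved and $M/\lambda$ is a valid witness at rationality $\lambda$. The main obstacle in this plan is that the gap $u_\pr(\sigma_\pr^{NE}, QR(\sigma_\pr^{NE})) - u_\pr(\sigma_\pr^{QNE}, QR(\sigma_\pr^{QNE}))$ at the base case is only a second-order effect of the logit mass on the dominated column, so $\delta$ must be chosen carefully. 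Positivity of the gap for all sufficiently small $\delta > 0$ can be established by a continuity argument, since both quantities are continuous in $\delta$ and collapse to the value of the reduced game when $\delta = 0$ (at which point NE is not unique, so $\delta > 0$ is strictly necessary).
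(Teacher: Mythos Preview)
Your approach is essentially the same as the paper's: exhibit an explicit $2\times 3$ zero-sum witness, verify the inequality numerically at $\lambda=1$, and then rescale utilities by $1/\lambda$ to cover every $\lambda>0$. The paper uses a different matrix (rows $(-6,9,9)$ and $(3,0,2)$) and simply reports the NE and QNE utilities, while you give a more structural explanation via a dominated column; but the logical skeleton is identical, and your direct-arithmetic check at $\delta=0.1$ does go through (I get $u_\pr(\sigma^{NE},QR(\sigma^{NE}))\approx 0.481$ versus $u_\pr(\sigma^{QNE},QR(\sigma^{QNE}))\approx 0.474$).

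One genuine flaw to fix: the continuity argument in your final paragraph does not work. You correctly observe that at $\delta=0$ both $u_\pr(\sigma^{NE},QR(\sigma^{NE}))$ and $u_\pr(\sigma^{QNE},QR(\sigma^{QNE}))$ collapse to the value $1/2$ of the reduced game, so the gap tends to $0$. But continuity of each quantity in $\delta$ then tells you nothing about the \emph{sign} of their difference for small $\delta>0$; a second-order (derivative) expansion would be needed for that. Since you already propose direct numerical verification at the fixed value $\delta=0.1$, this paragraph is unnecessary --- just drop it and rely on the explicit computation, exactly as the paper does.
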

\begin{center}
 \begin{tabular}{cccc}
  & A & B & C \\ \cline{2-4}
 X & \multicolumn{1}{|c|}{-6} & \multicolumn{1}{|c|}{9}  & \multicolumn{1}{|c|}{9} \\ \cline{2-4}
 Y & \multicolumn{1}{|c|}{3} & \multicolumn{1}{|c|}{0} & \multicolumn{1}{|c|}{2} \\ \cline{2-4}
 \end{tabular}
 \end{center}
 \begin{proof}
 In the provided game, the only NE for player $\pr$ is $(\frac{1}{6},\frac{5}{6})$ and QR against it results in expected utility 1.6438. QNE strategy for player $\pr$ is (0.1744,0.8256) resulting in expected utility 1.6366. Therefore, in this game with $\lambda = 1$ QNE is worse than NE against QR. For a different $\lambda>0$, the utilities can be re-scaled by $\frac{1}{\lambda}$ to achieve a similar result.
\end{proof}

\subsection{Proof of Theorem~\ref{thm:np}}
\setcounter{theorem}{3}

\begin{theorem}
Let $G$ be a two-player imperfect-information EFG with perfect recall and $QR$ be a quantal response function. Computing an optimal strategy of a rational player against the quantal response opponent in $G$ is an NP-hard if one of the following holds:
(1) $G$ is zero-sum and $QR$ is generated by a logit generator $q(x) = exp(\lambda x)$ for some $\lambda>0$;
or (2) $G$ is general sum.
\end{theorem}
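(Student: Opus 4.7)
The plan is to reduce from PARTITION. Given a multiset $S = \{s_1,\dots,s_n\}$ of positive integers with total sum $2T$, I would build a polynomial-size EFG $G(S)$ in which the leader can guarantee expected utility exactly $T$ against the quantal follower if and only if $S$ admits a balanced partition.

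The core gadget is an \emph{item-placement} subtree. The leader plays $n$ consecutive binary moves, the $i$-th of which nominally assigns item $s_i$ to subset $L$ or subset $R$; the resulting path reaches a single follower information set with actions $a_L,a_R$, giving the leader payoff equal to the sum of items assigned to $L$ or $R$ respectively. If the leader commits to a pure strategy realising a true partition, both follower payoffs equal $T$, making $a_L$ and $a_R$ indistinguishable to $QR$, so the logit follower splits $\tfrac{1}{2}{-}\tfrac{1}{2}$ and the leader collects $T$. If the pure partition is strictly unbalanced, the zero-sum logit follower places strictly more mass on the smaller action, pulling the leader's expected payoff strictly below $T$.

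The complication flagged in the sketch is that the leader could ``cheat'' by playing uniformly at each of her $n$ decision points: then each item's expected contribution splits evenly between $L$ and $R$, the follower sees equal expected utilities, and the leader obtains $T$ without solving PARTITION. To prevent this, I would attach a small \emph{companion} subgame to each of the leader's binary choices, chosen so that any mixing at a leader node strictly lowers the leader's utility in the companion. In the general-sum case the companion is a pure-coordination game in which the leader and follower are both rewarded for matching on a common action; any non-degenerate mixing by the leader strictly reduces the follower's matching probability and hence the leader's utility. In the zero-sum case, the companion is patterned on Game~1 of Figure~\ref{LQRobjgraph}: it has two symmetric pure global QSEs, while the symmetric mixed profile corresponds to a QNE whose value is strictly below the QSE value. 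Composing the item-placement subtree with the companion gadget via an initial chance or leader move (so utilities add up) yields a game whose QSE value equals $T$ iff $S$ has a balanced partition, completing the reduction.

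The main obstacle I anticipate is the zero-sum case, because the companion must itself be zero-sum and the disciplining argument uses the specific shape of the logit response rather than mere monotonicity. Concretely, I would need to rescale the item payoffs $s_i$ and the companion's payoffs (and pick $\lambda$) so that the per-node utility loss from any infinitesimal deviation toward mixing in the companion strictly exceeds the at-most-$T$ gain the leader could harvest in the partition subtree. I would verify this by writing the leader's expected utility as a product over item-placement nodes of a companion factor and an aggregate partition factor, then showing via a first-order argument that the partial derivative of the total utility with respect to the mixing probability at any single node is nonzero unless that node is played purely, so every QSE is pure at every item-placement node. Once the QSE is pure, the analysis collapses to the pure case sketched above, and deciding whether the optimal QSE value equals $T$ decides PARTITION. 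The general-sum branch of the theorem then follows by the same construction with the simpler coordination companion, and as the authors observe it re-derives NP-hardness of Strong Stackelberg in EFGs without any tie-breaking assumption.
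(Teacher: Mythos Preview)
Your high-level plan---reduce from \textsc{Partition}, use an item-placement gadget where the follower picks one of two ``sides,'' and attach a per-item companion game to punish uniform mixing---is exactly the paper's route. The general-sum branch with a coordination companion is fine as stated.

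The concrete gap is in the zero-sum branch. You assert that the zero-sum companion ``has two symmetric \emph{pure} global QSEs,'' and then plan to conclude that every QSE of the composite game is pure at each item node. This is not what Game~1 (or any natural zero-sum gadget against logit QR) gives you: the two symmetric optima in Figure~\ref{LQRobjgraph} are strictly interior, at some $s$ and $1-s$ with $s\in(0,1)\setminus\{1/2\}$. Against a logit follower, committing to a pure action in a symmetric zero-sum $2\times k$ game lets the follower concentrate on your worst column, so pure commitments are typically \emph{not} optimal; your first-order argument aimed at forcing purity will therefore not close. This is not a cosmetic issue: if purity were attainable, you would not need a companion at all, since the problematic ``split every item in half'' deviation is precisely the uniform \emph{mixed} strategy.

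The paper handles this by embracing the interior optima. It couples the leader's item-placement move and her companion move through a shared information set, so at item $i$ she must use the same probability $p_i$ in both. Optimality in the companion then forces $p_i\in\{s,1-s\}$ rather than $\{0,1\}$, and the partition is encoded by \emph{which} of $s$ or $1-s$ she picks. The follower's two actions in the partition subtree have equal expected utility iff $(2s-1)\big(\sum_{i\in I}x_i-\sum_{i\notin I}x_i\big)=0$, and since $s\neq 1/2$ this happens iff the partition is balanced. No gradient argument or delicate rescaling is needed: a direct inequality shows that the follower's uniform response is the leader's maximum in the partition subtree, and that maximum is attainable simultaneously with the companion optimum iff a balanced partition exists. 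If you replace ``pure'' by ``probability $s$ or $1-s$'' and use this coupling, your sketch becomes the paper's proof.
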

\begin{proof}
We reduce the problem of solving an instances of the partition problem to finding QSE in a specific zero-sum EFG. An instance of a partition problem is a multiset of positive integers $(x_i)_{i\in\{1\dots n\}}$. The question is whether there is a set of indices $I \subset \{1\dots n\}$, such that 
\begin{equation}
\label{eq:0sqse:part}
    \sum_{i\in I} x_i = \sum_{i \in \{1\dots n\} \setminus I} x_i.
\end{equation}

For constructing the game we use a special NFG with two distinct QSEs, which are different from the uniform strategy. An example of such a NFG is depicted in Figure~\ref{fig:nfg:0sqse}. 

\begin{figure}[h]
\begin{minipage}{0.59\linewidth}
\centering
\includegraphics[width=\linewidth]{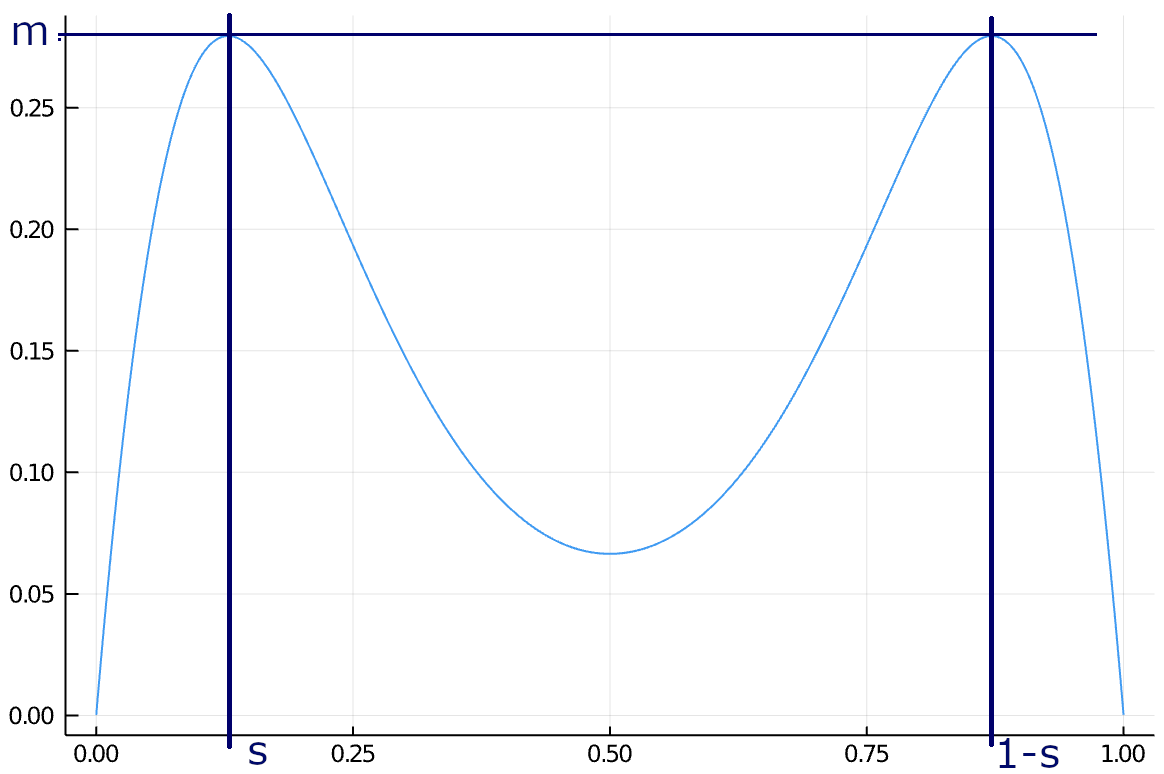}
\end{minipage}
\begin{minipage}{0.39\linewidth}
\scriptsize
 \centering
 \begin{tabular}{cccc}
  & A & B & C \\ \cline{2-4}
 X & \multicolumn{1}{|c|}{0} & \multicolumn{1}{|c|}{10}  & \multicolumn{1}{|c|}{0}  \\ \cline{2-4}
 Y & \multicolumn{1}{|c|}{0} & \multicolumn{1}{|c|}{0} & \multicolumn{1}{|c|}{10}  \\ \cline{2-4}
 \end{tabular}
\end{minipage}
\caption{ An NFG and a criterion function of its QSE with generator $q(x) = exp(x)$.
}
\label{fig:nfg:0sqse}
\end{figure}

In the first equilibrium, the rational player $\pr$ plays the first action with probability $s$. The second equilibrium is when she plays the second action with probability $s$. The expected reward of $\pr$ when playing either of these strategies is $m$, while any other strategy, and particularly the uniform strategy, achieves a lower reward.

Now we can proceed to constructing the game, which makes the rational player to commit to a strategy that solve the partition problem. The game starts with a uniform chance node. For each item, there is a subgame as indicated in the game in Figure~\ref{fig:efg:0sqse} (for two of the items $x_i$ and $x_j$).

\begin{figure}[h]
    \centering
    \includegraphics[width=0.95\linewidth]{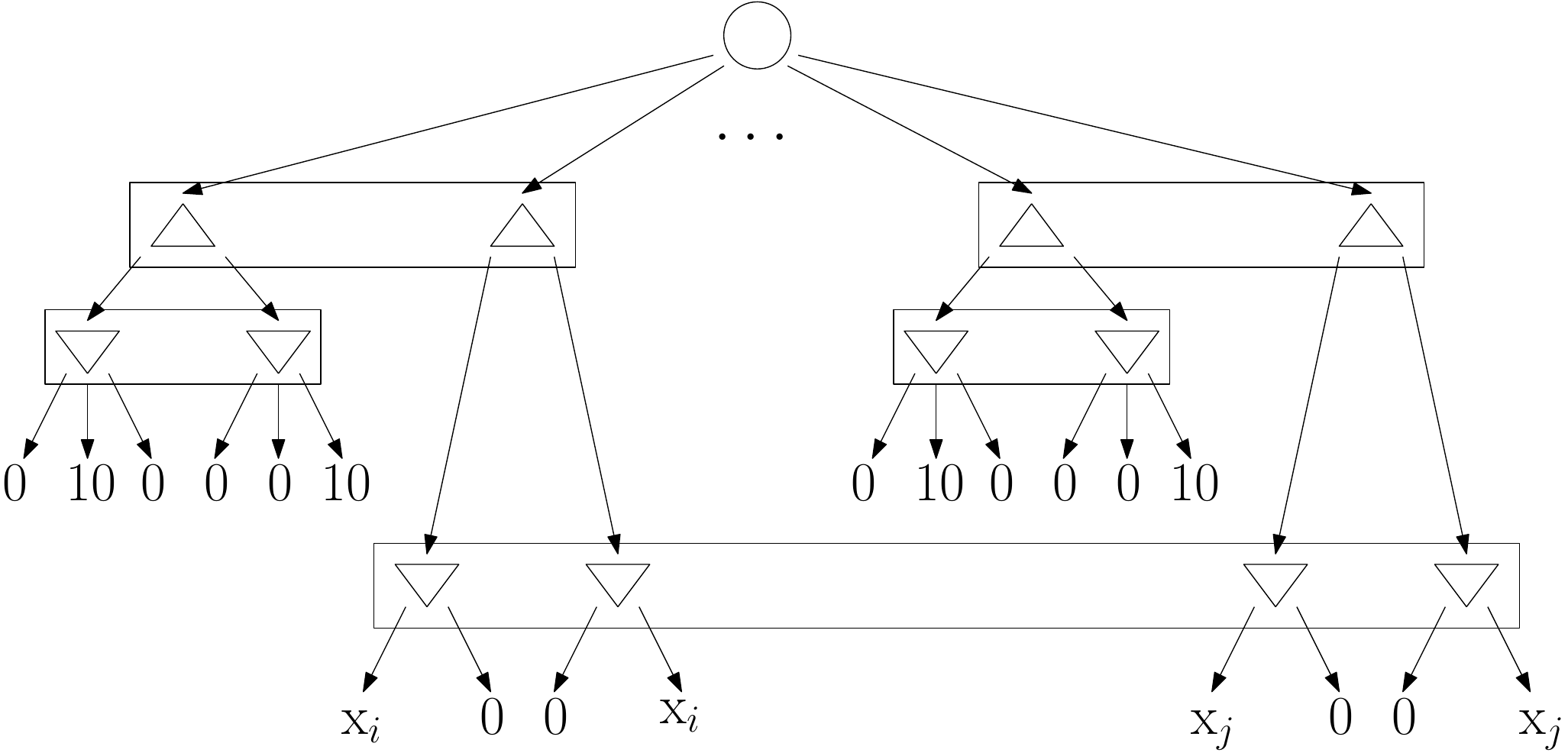}
    \caption{A constructed EFG for a partition problem.}
    \label{fig:efg:0sqse}
\end{figure}

There are two main components of each subgame. The first component (on the left) -- the NFG subtree -- is the EFG representation of the NFG game introduced earlier. To maximize her utility, the rational player $\pr$ is motivated to play either the first action with probability $s$ or $1-s$, but not a uniform strategy. The second part -- the partition subtree -- solves the partition problem.
\subsubsection{Solvable instances}
First, we construct the QSE of this game in case the partition problem has a solution, i.e., there exists an index set $I$ for which Eq.~\eqref{eq:0sqse:part} holds. To maximize the utility in the NFG subtrees, in each of her information sets player $\pr$ chooses only from the two strategies $s$ and $1-s$. For each item, if the item belongs to the set $I$, she chooses the strategy $s$. If she chooses strategy $1-s$, it means the item is from the complementary set. The expected utilities of player $\ps$ of actions $a_1, a_2$ in the lower information set are
\begin{align*}
    u_\ps(a_1) &= \frac{1}{2n}\left(-\sum_{i\in I}sx_i - \sum_{i\in\{1\dots n\} \setminus I}(1-s)x_i\right) \\
    u_\ps(a_2) &= \frac{1}{2n}\left(-\sum_{i\in I}(1-s)x_i - \sum_{i\in\{1\dots n\} \setminus I}sx_i\right).
\end{align*}
Because $I$ is the solution, we have $u_\ps(a_1) = u_\ps(a_2)$ and player $\ps$ is incentivized to play uniformly. The $\pr$'s utility in the partition subtrees is hence
\begin{equation*}
    u^U_\pr = \sum_{i\in\{1,\dots,n\}}\frac{x_i}{4n} = \frac{1}{4n}\sum_{i\in\{1,\dots,n\}}x_i.
\end{equation*}
Next, we show that utility $ u^U_\pr$ is optimal in the partition subtrees -- player $\pr$ can never get a higher utility. Let $x$ be a vector of the multiset integers of the partition problem and $\sigma$ be a vector of arbitrary probabilities of playing the first action in player $\pr$'s partition subtrees. We aim to prove that for any $\sigma$ and the corresponding vector of complementary probabilities of playing the second actions $\overline{1}-\sigma$ it holds that
\begin{equation*}
    \frac{1}{2n}\frac{x^T\sigma q(-x^T\sigma/2n) + x^T(\overline{1}-\sigma)q(-x^T(\overline{1}-\sigma)/2n)}{q(-x^T\sigma/2n)+q(-x^T(\overline{1}-\sigma)/2n)} \leq u^U_\pr.
\end{equation*}
Simple algebra shows this is equivalent to
\begin{equation}
\label{eq:0sqse:ub}
    x^T(\sigma-\overline{1/2})(q(-x^T\sigma/2n)-q(-x^T(\overline{1}-\sigma)/2n)) \leq 0.
\end{equation}
Because we have
\begin{equation*}
    q(-x^T\sigma/2n)-q(-x^T(\overline{1}-\sigma)/2n) \leq 0 \iff x^T(\sigma-\overline{1/2}) \geq 0,
\end{equation*}
Eq.~\eqref{eq:0sqse:ub} always holds and $u^U_\pr$ is indeed an upper bound. Because player $\pr$'s utility is maximized in both the NFG and the partition subtrees, it is a QSE and her utility if the partition problem is solvable is therefore 
\begin{equation*}
    u^*_\pr = m/2 + 1/2n\sum_{i\in\{1,\dots,n\}}x_i.
\end{equation*}



\subsubsection{Unsolvable instances}

Second, assume that the partition problem does not have a solution. We show that in this case, the utility of player $\pr$ in the QSE will be always strictly lower than $u^*_\pr$. Observe that because QSE with solvable instances achieves a maximum possible utility in the partition subtrees, in order to attempt to reach the same overall utility with unsolvable instances, player $\pr$ has to commit to the solution of the NFG game. Therefore, in each partition subtree, her only viable strategy is to play the first action with probability either $s$ or $1-s$. First, we analyze the utility of player $\pr$ in case the strategy of player $\ps$ is not uniform. From Eq.~\eqref{eq:0sqse:ub}, it follows that in case a vector $\sigma$ maximizes a utility of player $\pr$, it holds that
\begin{equation*}
    x^T(\sigma-\overline{1/2})(q(-x^T\sigma/2n)-q(-x^T(\overline{1}-\sigma)/2n)) = 0.
\end{equation*}
Consequently, if the strategy is not uniform, the difference in quantal functions is nonzero and it is easy to show that also the scalar product $x^T(\sigma-\overline{1/2})$ never reaches zero, thus, making impossible for a non-uniform strategy to be optimal. Therefore, to achieve utility $u^*_\pr$, player $\pr$ has to enforce a uniform strategy of player $\ps$. Given that player $\pr$ has to commit to either $s$ or $1-s$ in her upper information sets, we analyze the conditions when player $\ps$ is incentivized to play a uniform strategy. Let the set $I$ be defined similarly as earlier: an item belongs to $I$ if the first action in player $\pr$'s partition subtree is played with probability $s$. We have
\begin{equation*}
    u_\ps(a_1) = u_\ps(a_2) \iff (1-2s)\smashoperator{\sum_{i\in\{1\dots n\}}}x_i + (2s-1)\smashoperator{\sum_{i\in\{1\dots n\} \setminus I}}x_i = 0.
\end{equation*}
Because there is no $I$ such that the sums are equal and because by the setting of the NFG game $s\neq 1-s$, player $\pr$ never simultaneously enforces optimal utility in the NFG game and the partition subtrees. Her utility is hence strictly smaller than $u^*_\pr$. By analyzing the QSE of the reduced game we hence separate solvable and unsolvable instances of the partition problem.

\subsubsection{General-sum games}

The situation in non-zero-sum games is even simpler. The structure of the proof is exactly as the proof for zero-sum games above, but the role of the NFG subtree can be played by the cooperative coordination game:
\begin{center}
\scriptsize
 \begin{tabular}{ccc}
  & A & B \\ \cline{2-3}
 X & \multicolumn{1}{|c|}{1,1} & \multicolumn{1}{|c|}{0,0} \\ \cline{2-3}
 Y & \multicolumn{1}{|c|}{0,0} & \multicolumn{1}{|c|}{1,1} \\ \cline{2-3}
 \end{tabular}
\end{center}
For any quantal response function, player $\ps$ plays the action with higher expected utility with a higher probability. Therefore, the uniform strategy for player $\pr$ corresponds to the strict minimum of his utility achievable against any quantal opponent. Any other strategy will make the two actions of player $\ps$ have different expected utilities and hence the better will be played with probability more than 0.5, giving player $\pr$ better reward than the uniform strategy.
Since the game is completely symmetric, it has two distinct QSEs.

A similar argument holds also for the partition subtree, which stays unchanged from the zero-sum game. In solvable instances, player $\pr$'s commitment makes any quantal player be indifferent and play uniformly. In case of unsolvable instance, one of her action will be better and played with a strictly higher probability. This will give player $\ps$ more utility than the uniform strategy and hence it would be suboptimal for player $\pr$.





\end{proof}

\setcounter{theorem}{5}

\subsection{Proof of Proposition~\ref{prop:RMQRconv}}
\begin{proposition}
Let $G = (N,A,u)$ be a zero-sum NFG, $QR$ a quantal response function of the follower, which depends only on the ordering of expected utilities of individual actions. Then the RM-QR algorithm converges to QSE.
\end{proposition}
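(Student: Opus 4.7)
The plan is to invoke the convergence guarantee of \cite{davis2014using}: whenever the follower's response function $f$ satisfies the \emph{pretty-good-response} condition, regret minimization by the leader against $f$ in self-play produces an average leader strategy that maximizes $u_\pr(\sigma_\pr, f(\sigma_\pr))$---precisely the QSE objective. The task therefore reduces to verifying that any $QR$ depending only on the ordering of $\{u_\ps(\sigma_\pr, a_\ps)\}_{a_\ps \in A_\ps}$ satisfies this condition.

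The first step is to observe that such a $QR$ is finite-valued: there are at most $|A_\ps|!$ orderings of the follower's expected utilities, so $QR$ attains at most $|A_\ps|!$ distinct mixed strategies $r_1, \ldots, r_k \in \Sigma_\ps$. This partitions $\Sigma_\pr$ into relatively open regions $R_j = QR^{-1}(r_j)$ together with a lower-dimensional boundary set on which two or more of the follower's actions are tied in expected utility. On each $R_j$ the map $\sigma_\pr \mapsto u_\pr(\sigma_\pr, QR(\sigma_\pr)) = u_\pr(\sigma_\pr, r_j)$ is linear, so the QSE objective is piecewise linear on the whole simplex and attains its maximum at a vertex of some closure $\overline{R_{j^\star}}$.

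The pretty-good-response condition can then be verified via this local constancy. Within any region $R_j$ the follower's play is frozen at $r_j$, so the standard RM regret bound on the subsequence of iterations spent in $R_j$ forces the conditional average leader strategy to approach a best response to $r_j$ restricted to $\overline{R_j}$. Because only finitely many fixed responses are possible, the overall time-averaged strategy becomes, in the limit, an $\epsilon$-maximizer of $u_\pr(\cdot, QR(\cdot))$---the conclusion of the Davis--Burch--Bowling convergence claim---and this is QSE by definition.

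The main obstacle is handling the boundaries between regions, where the ordering is ambiguous and where iterates may linger. Here the ordering-only assumption is essential: if two actions of the follower tie in expected utility at some $\sigma_\pr$, any ordering-only $QR$ must weight them equally, so the limiting values $r_j$ and $r_{j'}$ on the two sides of the boundary agree on the tied actions. Thus $u_\pr(\sigma_\pr, r_j) = u_\pr(\sigma_\pr, r_{j'})$ along the boundary, ruling out oscillatory regret accumulation; combined with a compactness/subsequence argument on $\Sigma_\pr$ and the zero-sum structure of $G$, this lets the Davis--Burch--Bowling theorem apply and closes the proof of convergence of RM-QR to QSE.
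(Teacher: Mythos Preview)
You correctly identify the strategy---reduce to the pretty-good-response (PGR) hypothesis of \cite{davis2014using}---and you explicitly say the task is to verify that any ordering-only $QR$ is a PGR. But you never actually verify the PGR inequality
\[
u_\ps(\sigma_\pr, QR(\sigma_\pr)) \;\geq\; u_\ps(\sigma_\pr, QR(\sigma_\pr'))\quad\forall\,\sigma_\pr,\sigma_\pr'.
\]
Instead, after noting that $QR$ is piecewise constant, your argument drifts into a direct convergence sketch: regret bounds on the subsequence of iterates lying in a region $R_j$, conditional best responses restricted to $\overline{R_j}$, boundary continuity, and a compactness wrap-up. None of this establishes PGR, and the direct route has its own gaps: regret-matching bounds the \emph{overall} external regret, not the regret restricted to an arbitrary subsequence of rounds, so you cannot conclude that the conditional average on $R_j$ approaches a best response to $r_j$; and even if you could, stitching per-region best responses into a global QSE maximizer requires exactly the kind of cross-region comparison that PGR encodes and that you have not proved.

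The paper's proof is a one-line rearrangement argument that stays on the path you announced. Fix $\sigma_\pr$ and let $i_1,\dots,i_n$ be the induced descending ordering of follower utilities, so $u_\ps(\sigma_\pr,a^{i_1})\ge\dots\ge u_\ps(\sigma_\pr,a^{i_n})$. Because $QR$ depends only on the ordering, there is a fixed decreasing vector $p_1\ge\dots\ge p_n$ such that $QR(\sigma_\pr)$ places mass $p_k$ on $a^{i_k}$, while $QR(\sigma_\pr')$ places the \emph{same} masses $p_k$ on the actions $a^{j_k}$ ranked by $\sigma_\pr'$---i.e., a permutation of the $p_k$ over the same action set. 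Then
\[
u_\ps(\sigma_\pr, QR(\sigma_\pr)) = \sum_k p_k\, u_\ps(\sigma_\pr,a^{i_k}) \;\ge\; \sum_k p_k\, u_\ps(\sigma_\pr,a^{j_k}) = u_\ps(\sigma_\pr, QR(\sigma_\pr')),
\]
which is exactly PGR (the inequality is the rearrangement inequality: a decreasing weight vector paired with the correctly sorted values dominates any permutation). Invoking \cite{davis2014using} then gives convergence to QSE. Your region decomposition and boundary analysis are unnecessary; the missing ingredient is this direct comparison of the two induced follower strategies at the \emph{same} leader strategy~$\sigma_\pr$.
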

\begin{proof}
A response function $f$ is called a pretty-good-response if it satisfies 
\begin{equation}
\label{eq:pgr}
    u_\ps(\sigma_\pr, f(\sigma_\pr)) \geq u_\ps(\sigma_\pr, f(\sigma_\pr')) \quad\forall \sigma_\pr, \sigma_\pr'\in\Sigma_\pr.
\end{equation}
Let $QR$ be a simple quantal response function of the follower, which depends only on the descending ordering of expected utilities of follower's actions and consider two different $\sigma_\pr, \sigma_\pr'\in\sigma_\pr$. In case $\sigma_\pr$ induces the same ordering as  $\sigma_\pr'$, then $u_\ps(\sigma_\pr, QR(\sigma_\pr)) = u_\ps(\sigma_\pr, QR(\sigma_\pr'))$. Let $\sigma_\pr$ induce an ordering of indices $i_1, i_2, \dots, i_n$ and $\sigma_\pr'$ induce a different ordering $j_1, j_2, \dots, j_n$. By definition of a quantal function, $QR(\sigma_\pr,a^{i_1})\geq QR(\sigma_\pr,a^{i_2})\geq\dots \geq QR(\sigma_\pr,a^{i_n})$ and $QR(\sigma_\pr',^{j_1})\geq QR(\sigma_\pr',a^{j_2})\geq\dots \geq QR(\sigma_\pr',^{j_n})$. For each $k\in [n]$ it holds that $u_\ps(\sigma_\pr, a^{i_k})QR(\sigma_\pr,a^{i_k}) \geq u_\ps(\sigma_\pr, a^{i_k})QR(\sigma_\pr',a^{j_k})$ and therefore $u_\ps(\sigma_\pr, QR(\sigma_\pr)) \geq u_\ps(\sigma_\pr, QR(\sigma_\pr'))$. Simple QR is hence a pretty-good-response and RMQR converges to a strategy exploiting pretty-good-responses the most, which is a QSE strategy.
\end{proof}

\subsection{Proof of Proposition~\ref{prop:notPGR}}
\begin{proposition}
Let $QR$ be canonical quantal function with a strictly monotonically increasing generator $q$. Then $QR$ is not a pretty-good-response.
\end{proposition}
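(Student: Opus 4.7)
The plan is to refute the pretty-good-response condition by constructing an explicit two-action counterexample. The intuition is that, because $q$ is strictly monotone (not merely order-preserving on the argmax), the mass $QR$ places on a given follower action depends on the \emph{magnitude} of the expected-utility gap between that action and its competitors, not just on the ranking. Hence I can engineer two leader strategies $\sigma_\pr, \sigma_\pr'$ that make the same follower action optimal, but such that $\sigma_\pr'$ induces a strictly wider gap; the resulting $QR(\sigma_\pr')$ then places more probability on the superior action and therefore achieves a strictly higher payoff when evaluated against $\sigma_\pr$ itself, violating the defining inequality of a pretty-good-response.

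Concretely, I would take a $2\times 2$ NFG with leader actions $X, Y$ and follower actions $A, B$, and follower payoffs $u_\ps(X,A)=u_\ps(Y,A)=2$, $u_\ps(X,B)=1$, and $u_\ps(Y,B)=-c$ for a sufficiently large constant $c > 0$. Let $\sigma_\pr$ be the pure strategy on $X$ and $\sigma_\pr'$ the pure strategy on $Y$. Then action $A$ is uniquely optimal against both, but the utility gap is $1$ against $\sigma_\pr$ and $2+c$ against $\sigma_\pr'$. Writing $p = q(2)/(q(2)+q(1))$ and $p' = q(2)/(q(2)+q(-c))$ for the $QR$ probabilities on $A$, strict monotonicity of $q$ together with $-c < 1$ yields $q(-c) < q(1)$, whence $p' > p$.

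Substituting into the pretty-good-response condition evaluated at the test point $\sigma_\pr$, I obtain $u_\ps(\sigma_\pr, QR(\sigma_\pr)) = 2p + 1\cdot(1-p) = 1+p$ and $u_\ps(\sigma_\pr, QR(\sigma_\pr')) = 2p' + 1\cdot(1-p') = 1+p'$. Since $p' > p$ strictly, the required inequality $u_\ps(\sigma_\pr, QR(\sigma_\pr)) \geq u_\ps(\sigma_\pr, QR(\sigma_\pr'))$ fails, and $QR$ therefore cannot be a pretty-good-response. I do not expect a real obstacle here: the only ingredient beyond bookkeeping is the strict inequality $p' > p$, which is immediate from strict monotonicity of $q$. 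If one additionally wishes for a zero-sum witness to align with the setting of Proposition~\ref{prop:RMQRconv}, setting $u_\pr = -u_\ps$ makes the game zero-sum and leaves the computation unchanged.
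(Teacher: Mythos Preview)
Your proposal is correct and takes essentially the same approach as the paper: construct a $2\times 2$ game in which one follower column has constant payoff across the two leader rows while the other column varies, then compare the follower's utility at a fixed leader row under $QR$ computed against each row. The paper uses the ``mirror'' version (the constant column is the \emph{worse} action, with abstract values $a<b<c$) whereas you fix the constant column as the \emph{better} action with concrete numbers; the algebra and the role of strict monotonicity of $q$ are identical, and your remark that the witness can be made zero-sum is a nice addition. (Minor note: you do not actually need $c$ large---any value with $-c<1$ suffices---but this does not affect correctness.)
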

\begin{figure}[h!]
\centering
 \begin{tabular}{ccc}
  & \multicolumn{2}{c}{Game 3} \\
  & A & B \\ \cline{2-3}
 X & \multicolumn{1}{|c|}{b} & \multicolumn{1}{|c|}{a}  \\ \cline{2-3}
 Y & \multicolumn{1}{|c|}{c} & \multicolumn{1}{|c|}{a}  \\ \cline{2-3}
 \end{tabular}
 \caption{An example of NFG for which no monotonically increasing canonical quantal function constitutes a pretty-good-response.}
 \label{fig:npgr_ap}
 \end{figure}
\begin{proof}
In Figure~\ref{fig:npgr_ap}, we construct a game $G$ with $A_\ps = (A,B)$, such that no canonical quantal function is a pretty-good-response in this game. Let $a,b,c\in\real$, such that $a<b<c$. Since $q$ is strictly monotonically increasing, we have $q(a)<q(b)<q(c)$. By the definition of canonical quantal response, we have $QR(Y,A) - QR(X,A) = QR(Y,B) - QR(X,B)$. Because $q(b)<q(c)$, both sides of the equation are positive. Since $a<b$ it holds that $b(QR(Y,A) - QR(X,A)) > a(QR(Y,B) - QR(X,B))$, therefore $bQR(Y,A) + aQR(Y,B) > bQR(X,A) + aQR(X,B)$ and finally $u_\ps(X, QR(Y)) > u_\ps(X, QR(X))$. By definition in Equation~(\ref{eq:pgr}), $QR$ is hence not a pretty-good-response.
\end{proof}

\section{Evaluation}
\paragraph{Experimental setup.}  For all experiments except Goofspiel 7, we use Python 3.7. We solve non-linear optimization using the SLSQP GA from the SciPy 1.3.1 library. LP computations are done using gurobi 8.1.1, and experiments were done on Intel i7 1.8GHz CPU with 8GB RAM. Goofspiel experiment was run on 24 cores/48 threads 3.2GHz (2 x Intel Xeon Scalable Gold 6146) with 384GB of RAM, implemented in C++. For experiments on zero-sum NFGs, we used randomly generated square games and for general-sum NFGs we used randomly generated games, Grab the Dollar, Majority Voting, Traveler's Dilema and War of Attrition from GAMUT~\cite{nudelman2004run}. For EFGs, we used randomly generated sequential games, and Leduc Hold'em. In the experiments, we wanted to measure the scalability and performance of the proposed solutions and the baseline.

\paragraph{Domains.} \textbf{\textit{Randomly Generated NFGs}} are parametrized by sizes of both players' action spaces. Utilities are generated uniformly at random from integers between -9 and 10. \textbf{\textit{Grab the Dollar}} is a game with a prize that both players can grab at any given time, actions being the times. If both players grab it at the same time they both receive low payoff and when one player grabs the price before the opponent she receives high payoff and the opponent payoff somewhere between high and low. In \textbf{\textit{Majority Voting}} the players have utilities assigned to each action (candidate) being declared winner. And the winner is the candidate with the most votes. In a tie a candidate with higher priority is declared winner. \textbf{\textit{Travelers Dillema}} is a game where both players propose a payoff and the player with lower proposal wins the payoff plus some bonus and the opponent receives the payoff minus some bonus. In a \textbf{\textit{War of Attrition}}, two players are in a dispute over an object, and each chooses a time to concede the object to the other player. If both concede at the same time, they share the object. Each player has a valuation of the object, and each player’s utility is decremented at every time step. \textbf{\textit{Randomly Generated EFGs}} are EFGs where players switch each turn. The game has three parameters. One is the branching factor $b$, the second is the maximal number of observations received $o$, and the last one is maximal sequence length for one player $l$. Therefore, the maximal depth is $2l$. The path from the root correlates utilities, and the generation of utilities proceeds as follows. The value is set to 0 at the root and randomly changes by one up or down each time when moving to the children. The utility of a history is the value with which the leaf node is reached. We generated four sets in following way. Set \textbf{1:} $b=3,o=2,l=1$, \textbf{2:} $b=3,o=2,l=2$, \textbf{3:} $b=5,o=3,l=2$, \textbf{4:} $b=5, o=3,l=3$. During the generation we discarded the games where NE strategy was the same as GA strategy because such degenerate games would have all the values that we report the same. We kept generating and discarding until we had 300 games in each set. Number of games we had to generate in order to obtain 300 non degenerate games in each set is: \textbf{1} - 3571, \textbf{2} - 607, \textbf{3} - 479, \textbf{4} - 317.For \textbf{\textit{Leduc Hold'em}} we use the definition from~\cite{lockhart2019computing}. \textbf{\textit{Goofspiel 7}} is a bidding card game where players are trying to obtain the most points. Cards are shuffled and set face-down. Each turn, the top point card is revealed, and players simultaneously play a bid card; the point card is given to the highest bidder or discarded if the bids are equal. In this implementation, we use a fixed deck with K = 7.

\newpage
\section{Mathematical program to solve QSE}
\begin{observation}
\label{obs:qse_efg_mp}
Let $G$ be an extensive-form game and a $q$ be a generator of a canonical quantal function. Then QSE of $G$ can be formulated as a following non-concave mathematical program:
\begin{flalign}
\max_{r_\pr} &~v_\pr(root) \label{eq:qse_efg_max}\\
 r_i(\emptyset) &= 1 \quad \forall i \in N \label{eq:qse_efg_plan_0}\\
0 &\leq r_i(s_i) \leq 1 \quad  \forall s_i \in S_i, \forall i \in N \label{eq:qse_efg_plans_prob} \\
r_i(s_i) &= \sum_{a\in A_i(I_i)}r_i(s_ia) \label{eq:qse_efg_sum_of_plans}\\
&\forall s_i \in S_i, I_i \in inf_i(s_i), \forall i \in N \nonumber \\
v_i(I) &= \sum_{a \in A_i(I)}f_i(I,a)r_i(s_ia) \label{eq:qse_efg_endvalue}\\
&\forall I \in \mathcal{I}_i, s_i = seq_i(I), \forall i \in N \nonumber \\
r_\ps(s_\ps a) &= \frac{r_\ps(s_\ps)q({f_\ps(I,a)})}{\sum_{a \in A_\ps(I)q({f_\ps(I,a)})}} \label{eq:qse_efg_nonl2}\\ 
&\forall s_\ps \in S_\ps, I = inf_\ps(s_\ps), \forall a \in A_\ps(I)  \nonumber \\
f_\ps(I,a) &= \nonumber \\
&\sum_{I' \in \mathcal{I}_\ps:s_\ps a = seq_\ps(I')}v(I') + \sum_{s_\pr \in S_\pr}u(s_\pr,s_\ps a)r_\pr(s_\pr) \label{eq:qse_efg_f_one}\;\;\; \\ 
&\forall I \in \mathcal{I}_\ps, s_\ps = seq_\ps(I), \forall a \in A_\ps(I) \nonumber \\
f_\pr(I,a) &= \nonumber \\
&\sum_{I' \in \mathcal{I}_\pr:s_\pr a = seq_\pr(I')}v(I') + \sum_{s_\ps \in S_\ps}u(s_\pr a,s_\ps)r_\ps(s_\ps) \label{eq:qse_efg_f_two}\;\;\; \\ 
&\forall I \in \mathcal{I}_\pr, s_\pr = seq_\pr(I), \forall a \in A_\pr(I) \nonumber 
\end{flalign}
\end{observation}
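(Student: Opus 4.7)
The plan is to verify that feasible points of the program correspond bijectively to strategy pairs $(\sigma_\pr, QR(\sigma_\pr))$ and that the objective equals the leader's expected utility $u_\pr(\sigma_\pr, QR(\sigma_\pr))$; the optimum is then by definition a QSE. First, I would invoke the standard sequence-form representation for EFGs with perfect recall: constraints (\ref{eq:qse_efg_plan_0}), (\ref{eq:qse_efg_plans_prob}), and (\ref{eq:qse_efg_sum_of_plans}) are exactly the constraints that characterize legal realization plans $r_i$, and these biject with behavioral strategies $\sigma_i$ via $\sigma_i(I)(a) = r_i(s_i a)/r_i(s_i)$ whenever $r_i(s_i)>0$. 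This covers both players' strategy spaces.

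Next, I would show by induction on the depth of the information set, working from the leaves up, that $f_\ps(I,a)$ defined by (\ref{eq:qse_efg_f_one}) coincides with the counterfactual value $v_\ps(\sigma, I, a)$ from the main text. In the base case, where every child of $I$ on action $a$ is terminal, the sum $\sum_{s_\pr} u(s_\pr, s_\ps a)\, r_\pr(s_\pr)$ equals $\sum_{h\in I,\, z:\, ha\sqsubset z} \pi_{-\ps}^\sigma(ha)\, u_\ps(z)$ by expanding $r_\pr$ and absorbing chance into the sequence-form payoffs. The inductive step folds deeper counterfactual values $v(I')$ into $f_\ps$ through (\ref{eq:qse_efg_endvalue}); the symmetric statement establishes $f_\pr(I,a) = v_\pr(\sigma,I,a)$.

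Given this identification, constraint (\ref{eq:qse_efg_nonl2}) divided by $r_\ps(s_\ps)$ on the support of $r_\ps$ matches equation (\ref{eq:efg_lqr}) verbatim, i.e., it enforces $\sigma_\ps = QR(\sigma_\pr)$ at every reachable information set of $\ps$. The remaining piece is to unfold $v_\pr(root)$ top-down through (\ref{eq:qse_efg_endvalue}) and (\ref{eq:qse_efg_f_two}) to obtain $u_\pr(\sigma_\pr, \sigma_\ps)$, so that maximizing the objective is equivalent to choosing the commitment that maximizes the leader's payoff against the quantally responding follower. The main obstacle is technical rather than conceptual: sequences with $r_\ps(s_\ps)=0$ make $\sigma_\ps(I)$ formally undefined, which is precisely why (\ref{eq:qse_efg_nonl2}) is written in multiplicative rather than divisive form; one has to check that this multiplicative formulation is satisfied vacuously on zero-probability sequences without affecting the objective, leaning on perfect recall and the standard fact that the expected payoff is insensitive to strategy choices at unreached information sets.
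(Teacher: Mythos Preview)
Your proposal is correct and follows essentially the same approach the paper takes: the paper does not give a formal proof of this observation but only an informal constraint-by-constraint explanation (realization-plan constraints, information-set values, the quantal-response constraint, and the action-value recursion), and your plan is precisely a more rigorous elaboration of that same correspondence. The only thing worth noting is that the paper stops at the descriptive level, so your inductive argument that $f_\ps(I,a)$ matches the counterfactual value and your handling of the zero-probability sequences go beyond what the paper actually supplies.
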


Equation~\ref{eq:qse_efg_max} is for maximizing the expected value of player $\pr$ in the root over his realization plans. 

Equation~\ref{eq:qse_efg_plan_0} fixes probability of empty realization plan to 1, Equation~\ref{eq:qse_efg_plans_prob} constraints realization plans as probabilities and Equation~\ref{eq:qse_efg_sum_of_plans} defines the relationship of child plans to their parents. Equation~\ref{eq:qse_efg_endvalue} defines $v_i(I)$ as sum of values in each children times the realization plan there. Equation~\ref{eq:qse_efg_nonl2} defines the quantal response in realization plans of player $\ps$. Finally Equations~\ref{eq:qse_efg_f_one} and \ref{eq:qse_efg_f_two} define the action value summing over both descendant infosets and terminal nodes. Because of Equation~(\ref{eq:qse_efg_nonl2}), the program is not linear. The problem of computing the QSE is computationally difficult to solve -- it is an NP-hard problem.

\newpage
\section{Scalability on NFGs.}
Figure~\ref{nfg_scal} shows running time averaged over 1000 games for each size of square zero-sum NFGs. Ranging from 2 actions up to 377 actions.
\begin{figure}[h!]
\includegraphics[width=\linewidth]{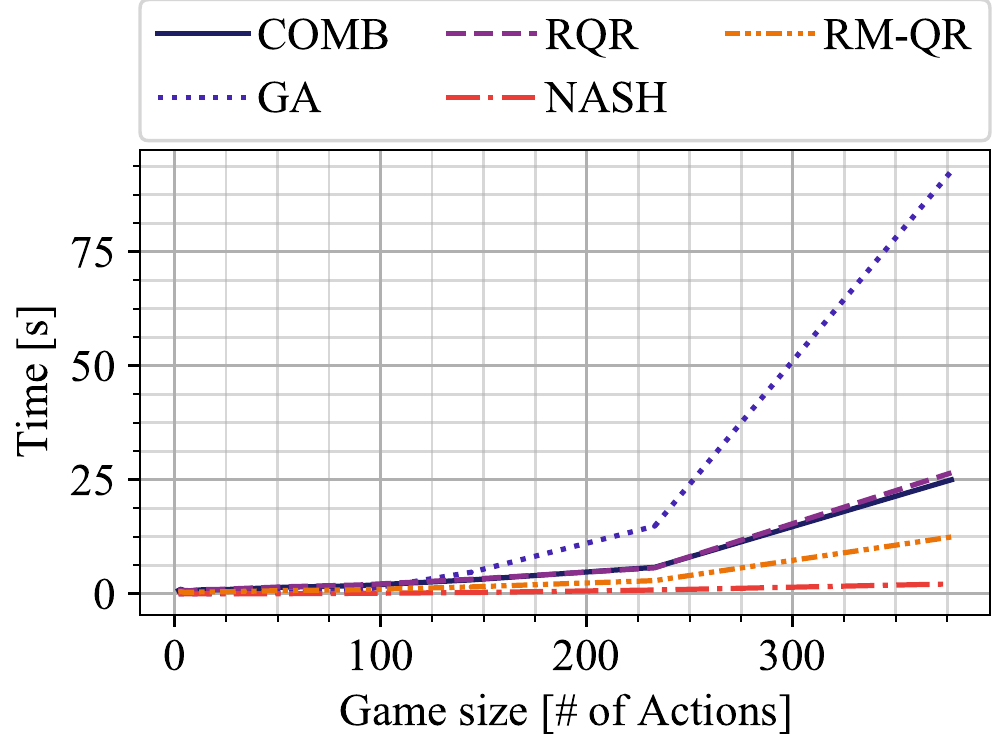}
\caption{Running time comparison of COMB, GA, RQR, NASH(SE), and QNE on zero-sum NFGs}
\label{nfg_scal}
\end{figure}

\section{One card poker results.}
Figure~\ref{ocp} shows the expected utility of the COMB and RQR when run with fixed $p$, for different values of $p$. CFR results are on left end of the RQR and COMB lines and CFR-QR is on the right end.
\begin{figure}[h!]
\includegraphics[width=\linewidth]{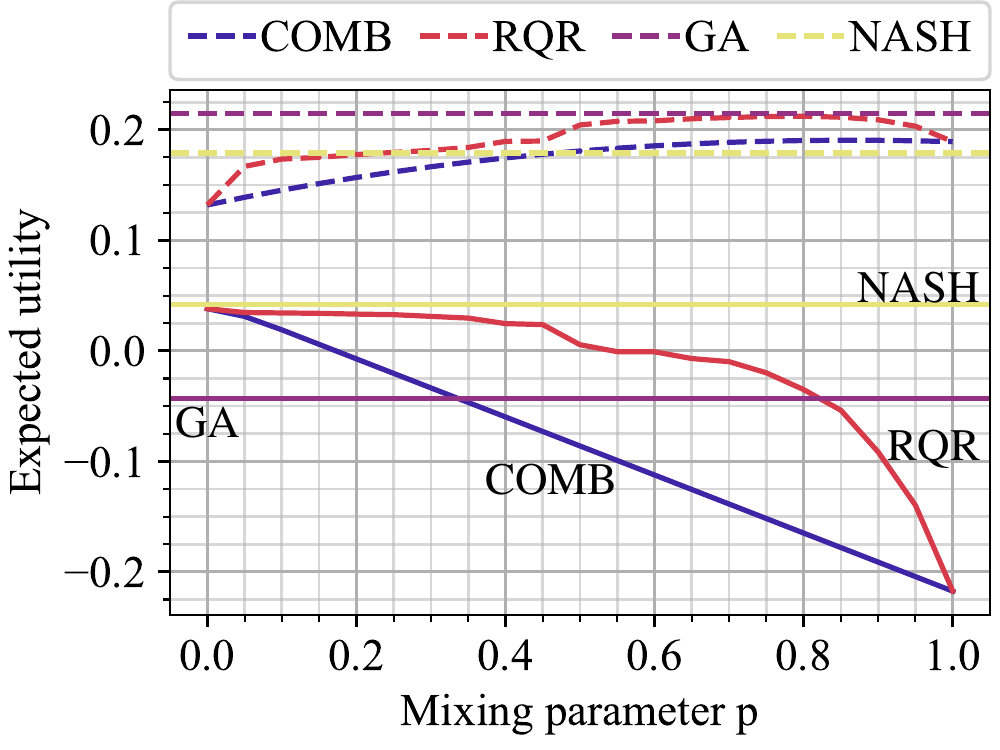}
\caption{Expected utility for different algorithms against CLQR (dashed) and BR (solid) in one card poker. $p$ is a constant for both regret minimization approaches. NASH and GA are also reported and CFR-QR is the value of COMB or RQR with $p = 1$.}
\label{ocp}
\end{figure}

\section{Attempts to Prove Conjecture~\ref{convergence}}
\textbf{CFR-f} soundness proof \cite{davis2014using} 
relies on the definition of pretty-good-responses (which do not include a majority of quantal responses) to prove optimality of the resulting CFR strategy. We tried to adapt it in a way that we exchange the $\max_{\sigma^*_1 \in \Sigma_1}u_1(\sigma^*_1,f(\sigma^*_1))$ for QNE, which is 
\begin{equation*}
\min_{\sigma^*_1 \in \Sigma_1}\max_{\sigma'_1 \in \Sigma_1}u_1(\sigma'_1, f(\sigma^*_1))-u_1(\sigma^*_1,f(\sigma^*_1)).
\end{equation*}
We aim to show that the CFR-QR's average strategy $\bar{\sigma}$ is close to the best response: 
\begin{equation*}
    \max_{\sigma'_1 \in \Sigma_1}u_1(\sigma'_1, f(\bar{\sigma}_1))-u_1(\bar{\sigma}_1,f(\bar{\sigma}_1)) \leq\epsilon.
\end{equation*}

However, the key step of the original proof requires exchanging the opponent's strategy using the pretty-good-response property, and without the property, the required inequality does not hold.

The proof of \textbf{CFR-BR} \cite{johanson2012finding} uses the link to folk theorem and the fact that the responding player has no positive regret, which obviously does not hold in case of bounded rationality when the regret is always non-zero.

The \textbf{current-strategy convergence} of CFR-BR in \cite{lockhart2019computing} shows that we can use the regret of CFR-BR to bound a regret in form of

\begin{equation*}
    R^T = \sum_{t=1}^T\mathit{l}(\sigma_1^t)-\inf_{\sigma_1^* \in \Sigma_1}\sum_{t=1}^T\mathit{l}(\sigma_1^*).
\end{equation*}

This approach can be used with loss function defined as $\mathit{l}(\sigma_1) = -u_1(\sigma_1, f(\sigma_1))$ to show that CFR-f with a pretty-good-response converges in current iterations. However, when used for approximating a saddle point with loss $\mathit{l}(\sigma_1) = \max_{\sigma^*_1 \in \Sigma_1}u_1(\sigma^*_1, f(\sigma_1))-u_1(\sigma_1, f(\sigma_1))$, we are no longer able to bound the changed regret using the original regret from CFR-f.

\section{Dynamics for Optimizing \texorpdfstring{$\mathbf{p}$}{p} in RQR}
An important part of RQR is the optimization of parameter $p$. We use a simple adaptive algorithm with 3 hyperparameters, initialized as $step = 0.01$, $decay = 2^{-1/iterations}$ and $threshold = 1.00001$. We start with $p=0.5$ and after each update of the rational player's strategy, we determine the direction for moving $p$ by a change in gain. If $new gain$ is more than $old gain \times threshold$ or smaller than $\frac{old gain}{threshold}$ (assuming the gains are positive, otherwise the multiplication and division are switched) we change the value of $p$ by step. In case the previous response was QR and the gain increased, we move $p$ towards QR, otherwise we move it towards BR. Vice versa, if the response was BR and the gain increased, we move $p$ towards BR, otherwise we move it towards QR. At the end of each iteration, we update the $step$ by multiplying it by $decay$.

\section{Hyperparameter Setting}
The values of the hyperparameters for RQR were set based on the initial exploration of the hyperparameter space on different classes of games. We ran RQR with 101 values of $p$ uniformly distributed across 0 and 1 and observed how far we were from the optimum. The selected values achieved an empirical error of less than $0.05$ during 1000, 2000, and 10,000 iterations.

In the COMB algorithm we used a fixed parameter $sweep$ of 11 uniformly distributed values between 0 and 1. Optimization in form of binary search could potentially achieve worse results since the objective is not always convex/concave.

\section{Final Values of \texorpdfstring{$\mathbf{p}$}{p} and \texorpdfstring{$\pmb{\alpha}$}{a} in RQR and COMB}
We analyzed the distributions of final values of parameters $p$ and $\alpha$ in both COMB and RQR. When using COMB, the most interesting pattern we observed is that playing QNE tends to be optimal against highly irrational opponents. In these cases, $\alpha=1$ was usually the final value. As the rationality of the opponent increases, the value of optimal $\alpha$ decreases. However, the exact value is game-specific since the rationality change is equivalent to rescaling utilities. In the reported experiments with rationality $\lambda=1$, $80\%$ of best $\alpha$ values were rather uniformly distributed in $\langle0.5, 1\rangle$. Setting $\alpha=0.8$ led to the best performance most often -- in 18.6\% of the cases.

Similarly to COMB, most $p$ values in RQR were uniformly distributed in $\langle0.5, 1\rangle$. However, we do not know if $p$ converged to the optimal value in main experiments since checking it would require running the second part of RQR many times, significantly increasing the experiments' computational time. Therefore, we analyzed only the data are from the initial experiments intended to tune the hyperparameters of RQR.

\section{Training Against Stronger Opponents}
We thank one of the reviewers for suggesting that training against stronger opponents could potentially improve performance against weaker opponents \cite{davis2014using}. We performed experiments evaluating this hypothesis. Indeed, the results confirmed that training against stronger opponents often results in better strategies against weaker opponents. We tested stronger opponents with new lambda ranging from 1.1-times to 3-times the original lambda. According to the results, an optimal lambda to train against was estimated to be 2-times the original lambda. This new approach yielded stronger strategies than QNE but was still inferior to RQR on average. In a few games, it outperformed even RQR. In those games, training RQR with stronger opponents resulted in even more superior strategies.

\end{document}